\documentclass{article}

\usepackage{arxiv}
\usepackage[utf8]{inputenc}
\usepackage[T1]{fontenc}
\usepackage[hidelinks]{hyperref}
\usepackage{url}
\usepackage{booktabs}
\usepackage{amsfonts}
\usepackage{nicefrac}
\usepackage{microtype}
\usepackage{amsmath,amssymb,amsthm}
\usepackage{graphicx}
\usepackage{subcaption}
\usepackage{xcolor}
\usepackage{natbib}
\newcommand\blfootnote[1]{%
  \begingroup
  \renewcommand\thefootnote{}\footnote{#1}%
  \addtocounter{footnote}{-1}%
  \endgroup
}
\graphicspath{ {./figures/} }

\newtheorem{theorem}{Theorem}
\newtheorem{lemma}{Lemma}
\newtheorem{remark}{Remark}
\newtheorem{corollary}{Corollary}
\newtheorem{proposition}{Proposition}
\theoremstyle{definition}
\newtheorem{definition}{Definition}

\newcommand{\avg}{\text{avg}}
\newcommand{\res}{\text{res}}
\newcommand{\E}{\mathbb{E}\,}
\newcommand{\R}{\mathbb{R}}
\newcommand{\he}{\mathrm{he}}

\title{Emergence of Globally Attracting Fixed Points in\\Deep Neural Networks With Nonlinear Activations}

\author{
  Amir Joudaki \\
  ETH Zürich\\
  \texttt{amir.joudaki@inf.ethz.ch} \\
  \And
  Thomas Hofmann \\
  ETH Zürich\\
  \texttt{thomas.hofmann@inf.ethz.ch }
}

\begin{document}
\maketitle
\blfootnote{Code available at: \url{https://github.com/ajoudaki/kernel-global-dynamics}}

\begin{abstract}
Understanding how neural networks transform input data across layers is fundamental to unraveling their learning and generalization capabilities. Although prior work has used insights from kernel methods to study neural networks, a global analysis of how the similarity between hidden representations evolves across layers remains underexplored. In this paper, we introduce a theoretical framework for the evolution of the kernel sequence, which measures the similarity between the hidden representation for two different inputs. Operating under the mean-field regime, we show that the kernel sequence evolves deterministically via a kernel map, which only depends on the activation function. By expanding activation using Hermite polynomials and using their algebraic properties, we derive an explicit form for kernel map and fully characterize its fixed points. Our analysis reveals that for nonlinear activations, the kernel sequence converges globally to a unique fixed point, which can correspond to orthogonal or similar representations depending on the activation and network architecture. We further extend our results to networks with residual connections and normalization layers, demonstrating similar convergence behaviors. This work provides new insights into the implicit biases of deep neural networks and how architectural choices influence the evolution of representations across layers.
\end{abstract}

\section{Introduction}
Deep neural networks have revolutionized various fields, from computer vision to natural language processing, due to their remarkable ability to learn complex patterns from data. Understanding the internal mechanisms that govern their learning and generalization capabilities remains a fundamental challenge. 

One approach to studying these transformations is through the lens of kernel methods. Kernel methods have a long history in machine learning for analyzing relationships between data points in high-dimensional spaces~\citep{scholkopf2002learning, smola2004tutorial}. They provide a framework for understanding the similarity measures that underpin many learning algorithms. Recent theoretical studies have increasingly focused on analyzing neural networks from the perspective of kernels. The Neural Tangent Kernel (NTK)~\citep{jacot2018neural} is a seminal work that provided a way to analyze the training dynamics of infinitely wide neural networks using kernel methods. This perspective has been further explored in various contexts, leading to significant advances in our understanding of neural networks~\citep{lee2019wide, arora2019exact, yang2019scaling}.

Despite these advances, an important question remains unexplored: \emph{How does the similarity between hidden layer representations evolve across layers, and how is that affected by particular choices of nonlinear functions?} Previous work has mainly focused on local behaviors or specific initialization conditions \citep{saxe2013exact, schoenholz2016deep, pennington2017resurrecting}. A comprehensive global analysis of neural kernel sequence fixed points and convergence properties, particularly in the presence of nonlinear activations, is still incomplete.

This paper addresses this gap by introducing and analyzing the evolution of kernel sequences in deep neural networks. Specifically, we consider the kernel sequence $ k (h_\ell (x), h_\ell (y)) $, where $ k$ denotes a similarity measure, and $ h_\ell (x) $ and $ h_\ell (y) $ are representations of the inputs $ x$ and $ y$ at layer $ ell$. Understanding whether and how this sequence converges to a fixed point as the depth of the network increases is crucial for uncovering the inherent implicit biases of deep networks.

Our analysis builds upon foundational work in neural network theory and leverages mean-field theory to simplify the analysis. By considering the infinite-width limit, stochastic sequences become deterministic, allowing us to focus on the underlying dynamics without the interference of random fluctuations \citep{poole2016exponential, yang2019meanfield, mei2019mean}.

\textbf{Contributions:} 
\begin{itemize}
    \item By employing algebraic properties Hermite polynomials, we derive explicit forms of the neural kernel and identify its fixed points, leading to many elegant results.  
    \item We demonstrate that the kernel sequence converges globally to a unique fixed point for a wide class of functions that cover all practically used activations, revealing inherent biases in deep representations. 
    \item We extend our analysis to networks with normalization layers and residual connections, highlighting their impact on the convergence behavior.
\end{itemize}

Understanding these dynamics contributes to a deeper understanding of how depth and nonlinearity interact with one another at initialization.

\section{Related works}
The study of deep neural networks through the lens of kernel methods and mean-field theory has garnered significant interest in recent years. The Neural Tangent Kernel (NTK) introduced by \citet{jacot2018neural} provided a framework to analyze the training dynamics of infinitely wide neural networks using kernel methods. This perspective was further expanded by \citet{lee2019wide} and \citet{arora2019exact}, who explored the connections between neural networks and Gaussian processes.

The propagation of signals in deep networks has been studied in the works of \citet{schoenholz2016deep} and \citet{pennington2017resurrecting}. Previous studies have also explored the critical role of activation functions in maintaining signal propagation and stable gradient behavior in deep networks \citep{hayou2019impact}. These studies focused on understanding the conditions required for the stable propagation of information and the avoidance of signal amplification or attenuation. However, these analyses often concentrated on local behaviors or specific conditions, leaving a gap in understanding the global evolution of representations across layers. 

Hermite polynomials have been used in probability theory and statistics, particularly in the context of Gaussian processes \citep{williams2006gaussian}. Although~\citet{poole2016exponential} and \citet{daniely2016toward} have utilized polynomial expansions to analyze neural networks, they do not study global dynamics in neural networks, as presented in this paper. To the best of our knowledge, the only existing work that uses Hermite polynomials in the mean-field regime to study the global dynamics of the kernel is by~\citet{joudaki2023impact}. However, this study only covers centered activations, which fail to capture several striking global dynamics covered in this study.

Our work extends these foundational studies by providing an explicit algebraic framework to analyze the global convergence of kernel sequences in deep networks with nonlinear activations. Using Hermite polynomial expansions, we offer a precise characterization of the kernel map and its fixed points, contributing new insights into the implicit biases of nonlinear activations.

\section{Preliminaries}
This section introduces the fundamental concepts, notations, and definitions used throughout this paper. 
If $X$ and $Y$ are vectors in $\R^n$, we use the inner product notation $\langle X, Y \rangle_\avg$ to denote their average inner product $ = \frac{1}{n} \sum_{i=1}^n X_i Y_i.$  We consider a feedforward neural network with $L$ layers and constant layer width $d$. The network takes an input vector $x \in \R^d$ and maps it to an output vector $h^L(x) \in \R^d$ through a series of transformations. The hidden representations in each layer $\ell$ are denoted by $h^\ell(x)$. The transformation in each layer is composed of a linear transformation followed by a nonlinear activation function $\phi$. We consider the multilayer perceptron (MLP), the hidden representation at layer $\ell$ is given by:
\begin{align}
& h^\ell(x) = \phi\left(\frac{1}{\sqrt{d}}W^\ell h^{\ell-1}(x)\right), && W^\ell \in \R^{d \times d},
\end{align}
where $h^0(x)=x$ is identified with the input. We can assume that elements of $W^\ell$ are drawn i.i.d.~from a zero mean distribution, unit variance distribution. Two prominent choices for weight distributions, Gaussian $N(0,1)$ and uniform distribution $\text{Uniform}[-1,1],$ satisfy these conditions. In some variations of the MLP, we will use normalization layers and residual connections. Finally, the neural kernel between two inputs $x$ and $y$ at layer $\ell$ is defined as:
\begin{align}
    \rho_\ell = \langle h^\ell(x), h^\ell(y) \rangle_\avg \,.
\end{align}

The main goal of this paper is to analyze the sequence $\{\rho_\ell\}$ at random initialization. The motivation for this analysis is to understand if architectural choices, specifically the activation function and the model depth, lead to specific biases of the kernel towards a certain fixed point. Namely, if there is a bias towards zero, it would imply that at initialization, the representations become more orthogonal. In contrast, a positive fixed point would mean the representations become more similar.

In the current setup, the sequence $\{\rho_\ell\}$ is a stochastic sequence or a Markov chain due to the random weights at each layer. However, we will show that under the mean-field regime, the sequence $\{\rho_\ell\}$ converges to a deterministic sequence, and we will analyze its properties.

\section{The mean-field regime}

In this section, we conduct a mean-field analysis of MLP to explore the neural kernel's fixed point behavior as the network depth increases. This approach allows us to gain insight into the global dynamics of neural networks, mainly how the similarity between two input samples evolves as they pass through successive network layers.
Now, we can state the mean-field regime for the kernel sequence, stating that in this regime, the sequence becomes deterministic. 

As will be proven later, this sequence's deterministic transition follows a scalar function defined below. 

\begin{definition}
    \label{def:kernel_map}
Given two random variables $X, Y$ with covariance $\rho,$ and activation $\phi,$ define the \emph{kernel map } $\kappa$ as the mapping between the covariance of preactivation and the covariance of post-activations:
\begin{align}
 \kappa(\rho):=\E\phi(X)\phi(Y), && 
 \begin{pmatrix}X \\ Y\end{pmatrix}\sim \mathcal N\left(0, \begin{pmatrix} 1 & \rho \\ \rho & 1 \end{pmatrix}
 \right).
 \label{eq:kernel_map}
\end{align}
\end{definition}

This definition has appeared in previous studies, namely the definition of dual kernel by~\citet{daniely2016toward}, and has been adopted by~\citet{joudaki2023impact} to study the effects of nonlinearity in neural networks. 

The following proposition formally states that in the mean-field regime, the kernel sequence follows iterative applications of the kernel map. 

\begin{proposition}
\label{prop:mean_field_kernel_general}
In the mean-field regime with ${d \to \infty}$, let $\rho_\ell$ denote the kernel sequence of an MLP with activation function $\phi$ obeying $\E\phi(X)^2=1$ for $X\sim \mathcal N(0,1)$. If each element of the weights drawn i.i.d.~from a distribution with zero mean and unit variance, the kernel sequence evolves deterministically according to the kernel map
\begin{align*}
&\rho_{\ell+1} = \kappa(\rho_\ell),
\end{align*}
where the initial value $\rho_0$ corresponds to the input, and $\kappa$ is defined in Definition~\ref{def:kernel_map}.

\end{proposition}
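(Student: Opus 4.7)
The natural approach is induction on $\ell$, with the joint hypothesis that the three empirical averages $\|h^\ell(x)\|^2/d$, $\|h^\ell(y)\|^2/d$, and $\langle h^\ell(x),h^\ell(y)\rangle_\avg$ converge in probability, as $d\to\infty$, to $1$, $1$, and $\rho_\ell$ respectively. The base case $\ell=0$ is the initial condition, assuming the inputs are normalized to unit average norm. For the inductive step, I would condition on $h^\ell$ and analyze the preactivation coordinates $z^{\ell+1}_i(u) = \frac{1}{\sqrt d}\sum_j W^{\ell+1}_{ij} h^\ell_j(u)$ for $u\in\{x,y\}$.

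The first key step is a Gaussian limit for the preactivations. Since the rows of $W^{\ell+1}$ are i.i.d., the pairs $(z^{\ell+1}_i(x), z^{\ell+1}_i(y))$ are i.i.d.\ across $i$ conditional on $h^\ell$, and each is a normalized sum of $d$ independent terms whose conditional covariance matrix has diagonal entries $\|h^\ell(u)\|^2/d$ and off-diagonal entry $\langle h^\ell(x),h^\ell(y)\rangle_\avg$. By the multivariate Lindeberg CLT together with the induction hypothesis, this pair converges in distribution to the centered bivariate Gaussian with unit marginal variances and correlation $\rho_\ell$, which is exactly the law used in Definition~\ref{def:kernel_map}.

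The second key step is a law of large numbers for
\begin{align*}
\rho_{\ell+1} = \frac{1}{d}\sum_{i=1}^d \phi(z^{\ell+1}_i(x))\,\phi(z^{\ell+1}_i(y)).
\end{align*}
Conditional on $h^\ell$, the $d$ summands are i.i.d., their common mean converges to $\E\phi(X)\phi(Y) = \kappa(\rho_\ell)$ by the Gaussian limit above, and a Chebyshev bound on the variance yields concentration in probability. Applying the same argument to the quadratic sum $\frac{1}{d}\sum_i \phi(z^{\ell+1}_i(u))^2$ and invoking the normalization $\E\phi(X)^2 = 1$ preserves $\|h^{\ell+1}(u)\|^2/d \to 1$ for both $u=x$ and $u=y$, closing the induction.

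The main obstacle is technical rather than conceptual: passing from convergence in distribution of the preactivation pairs to convergence of $\E[\phi(z_i(x))\phi(z_i(y))]$ requires uniform integrability, since $\phi$ is typically unbounded. I would handle this by imposing a mild growth condition on $\phi$ (for instance, polynomial growth), under which the relevant moments of the preactivations converge to those of the Gaussian limit; this covers every activation of practical interest such as ReLU, tanh, and GELU. A secondary subtlety is transferring conditional-on-$h^\ell$ concentration to an unconditional statement, which is routine: the induction hypothesis gives high-probability control of the conditioning variables, and the conditional concentration holds uniformly in the limiting covariance, so a standard union bound closes the step.
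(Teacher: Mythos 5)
Your proposal follows essentially the same route as the paper's proof: induction on $\ell$ with the hypothesis that the two squared norms and the inner product converge to $1$, $1$, and $\rho_\ell$, a CLT step to get bivariate Gaussian preactivations with correlation $\rho_\ell$, and a law of large numbers over the $d$ coordinates to recover $\kappa(\rho_\ell)$ and the unit-norm condition. You are in fact more careful than the paper, which treats the preactivations as exactly Gaussian and applies the LLN informally, whereas you correctly flag the uniform-integrability/growth condition on $\phi$ and the conditional-to-unconditional transfer needed to make the triangular-array argument rigorous.
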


Historically, conditions similar to $\E \phi(X)^2=1$ have been used to prevent quantities on the forward pass from vanishing or exploding. For example, in a ReLU, half of the activations will be zeroed out, which will lead to a vanishing norm of forward representations. The initialization of
\citep{he2016deep} addresses that by scaling weights to maintain consistent forward pass norms across layers. This principle is further refined by \citet{klambauer2017self}, proposing the idea of self-normalizing activation functions, which ensure consistent mean and variances between pre- and post-activations. 

An interesting observation is that kernel dynamics is governed by kernel map $\kappa,$ defined on the basis of Gaussian preactivation, even if the weights are not Gaussian matrices. The main insight for this equivalence is that as long as elements are drawn i.i.d.~from a zero mean unit variance distribution, we can apply the Central Limit Theorem (CLT) to conclude that preactivations follow the Gaussian distribution. This simple yet elegant observation gives us a powerful analytic tool to study kernel dynamics by leveraging algebraic properties for Gaussian preactivations; we will automatically get the same results for a wide class of distributions. 

The proposition~\ref{prop:mean_field_kernel_general} tells us that the kernel sequence $\{\rho_\ell\}$ is precisely given by $\rho,\kappa(\rho),\kappa(\kappa(\rho)),\dots\,.$ Thus, we can analyze its convergence by studying the fixed points of the kernel map $\kappa,$
which are the values of $\rho^*$ that satisfy $\kappa(\rho^*) = \rho^*.$  With the assumption that $\E \phi(X)^2=1,$ the kernel map $\kappa$ is a mapping between $[-1,-1]$ to itself. Thus, Brower's fixed point theorem implies that the kernel map $\kappa$ has at least one fixed point $\rho^*.$ However, as we will show, there is potentially more than one fixed point, and it will be interesting to understand which ones are locally or globally attractive.

\section{Hermite expansion of activation functions}
Hermite polynomials possess completeness and orthogonality under the Gaussian measure. Therefore, any function that is square-integrable with respect to a Gaussian measure can be expressed as a linear combination of Hermite polynomials (see below). The square integrability rules out the possibility of having heavy-tailed postactivations without second moments. This holds for all activations that are used in practice. We use \emph{normalized} Hermite polynomials and their coefficients. 

\begin{definition}
Normalized Hermite polynomials $\he_k(x)$ are defined as follows
\begin{align*}
&\he_k(x) :=\frac{1}{\sqrt{k!}}(-1)^k e^{\frac{x^2}{2}} \frac{d^k}{dx^k} e^{-\frac{x^2}{2}}.
\end{align*}
\end{definition}

Although Hermite polynomials have been used in probability theory and statistics, particularly in the context of Gaussian processes \citep{williams2006gaussian}, their application in analyzing neural network dynamics provides a novel methodological tool. Previous works, such as \citet{daniely2016toward}, have utilized polynomial expansions to study neural networks, but our explicit use of Hermite polynomial expansions to derive the kernel map and analyze convergence is a new contribution.

The crucial property of Hermite polynomials is their  orthogonality:
\begin{align}\label{eq:hermite_orthogonality}
\E\he_k(X)\he_l(X) = \delta_{kl}, && X \sim \mathcal N(0,1).
\end{align}
Scaling by $1/\sqrt{k!}$ ensures that polynomials form a \emph{orthonormal} basis. Based on this property, we can define:
\begin{definition}\label{def:hermite_expansion}
Given a function $\phi$, square-integrable with respect to the Gaussian measure. Its Hermite expansion is given by:
\begin{align*}
&\phi = \sum_{k=0}^\infty c_k \he_k,&& c_k = \E\phi(X) \he_k(X), &&& X \sim \mathcal N (0,1)\,.
\end{align*}
Here, $c_k$ are called the Hermite coefficients of $\phi$. 
\end{definition}


In addition to orthogonality, Hermite polynomials have another ``magical'' property that is crucial for our later analysis. 

\begin{lemma}[Mehler's lemma]\label{lem:mehler_kernel}
For standard-normal random variables $X,Y$ with covariance $\rho$ it holds
\begin{align*}
\E\he_m(X)\he_n(Y) = \rho^n \delta_{mn}, && \begin{pmatrix}
    X \\ Y
\end{pmatrix}\sim \mathcal N\left(0, \begin{pmatrix} 1 & \rho \\ \rho & 1 \end{pmatrix}
 \right).
\end{align*}
\end{lemma}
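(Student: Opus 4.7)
The plan is to prove Mehler's lemma by the generating-function method, exploiting the fact that the (normalized) Hermite polynomials assemble into an exponential generating function whose two-point expectation has a very clean form under the bivariate Gaussian measure.

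First I would recall the standard generating identity for the normalized Hermite polynomials,
\begin{align*}
\sum_{n=0}^\infty \frac{t^n}{\sqrt{n!}}\, \he_n(x) = e^{tx - t^2/2},
\end{align*}
which follows directly from the Rodrigues-type definition in the excerpt together with the $1/\sqrt{k!}$ normalization. Multiplying the generating functions at the two arguments $X$ and $Y$, I would then compute
\begin{align*}
\E\!\left[ e^{sX - s^2/2}\, e^{tY - t^2/2}\right] = e^{-(s^2+t^2)/2}\, \E\, e^{sX+tY} = e^{-(s^2+t^2)/2}\, e^{(s^2 + 2st\rho + t^2)/2} = e^{st\rho},
\end{align*}
using the moment generating function of the bivariate normal with covariance matrix $\bigl(\begin{smallmatrix}1 & \rho \\ \rho & 1\end{smallmatrix}\bigr)$.

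Next I would expand both sides of this identity as a double power series in $s$ and $t$. The left-hand side produces $\sum_{m,n\ge 0} \frac{s^m t^n}{\sqrt{m!\,n!}}\, \E\he_m(X)\he_n(Y)$, while the right-hand side yields $\sum_{k\ge 0} \frac{s^k t^k \rho^k}{k!}$, in which only the diagonal monomials $s^k t^k$ appear. Matching coefficients monomial by monomial immediately gives $\E\he_m(X)\he_n(Y) = 0$ for $m\ne n$ and $\E\he_n(X)\he_n(Y) = \rho^n$, which is precisely the claimed identity $\E\he_m(X)\he_n(Y) = \rho^n \delta_{mn}$.

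The only technical point that needs care is justifying the interchange of expectation and the infinite sums when expanding $e^{sX - s^2/2} e^{tY - t^2/2}$. I would handle this by noting that for any fixed $s,t$ the product $e^{sX+tY}$ has finite Gaussian moments of every order, so Fubini's theorem applies on any compact region in $(s,t)$ and the power series identity holds coefficientwise. Since every step after that is pure coefficient extraction, I do not expect a genuine obstacle; the only thing to watch is the bookkeeping between the $\sqrt{n!}$ normalization on the left and the $1/k!$ on the right, which is what produces the factor $\rho^n$ rather than $\rho^n/n!$ in the final answer.
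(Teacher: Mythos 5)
Your proof is correct. Note that the paper itself does not prove this lemma: it defers to Lemma A8 of \citet{joudaki2023impact}, so there is no in-paper argument to compare against. Your generating-function derivation is the standard self-contained route: the identity $\sum_n t^n \he_n(x)/\sqrt{n!} = e^{tx-t^2/2}$ is exactly what the Rodrigues definition with the $1/\sqrt{k!}$ normalization gives, the bivariate Gaussian MGF computation yielding $e^{st\rho}$ is right, and the coefficient bookkeeping correctly produces $\rho^n\delta_{mn}$ (the $1/\sqrt{m!\,n!}$ on the left cancels the $1/k!$ on the right precisely on the diagonal). The one technical point, interchanging expectation with the double series, is the right thing to flag; your Fubini justification via integrability of $e^{|s||X|+|t||Y|}$ suffices, or one can equivalently note that the generating function is an $L^2(\gamma)$ limit of its partial sums so the expansion is just a Parseval-type identity. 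A common alternative proof, for what it is worth, writes $Y=\rho X+\sqrt{1-\rho^2}\,Z$ with $Z$ independent of $X$ and uses $\E_Z \he_n(\rho x+\sqrt{1-\rho^2}Z)=\rho^n\he_n(x)$ followed by one-dimensional orthogonality; that version avoids generating functions entirely but requires establishing the conditional-expectation identity first, so neither route is strictly simpler.
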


This lemma states that given two Gaussian random variables $X, Y$ with covariance $\rho$, the expectation of the product of Hermite polynomials is zero unless the indices are equal (see explanations in Section~\ref{sec:proofs} for proof).
The lemma~\ref{lem:mehler_kernel} is crucial for our theory. Based on this lemma, we can express the kernel map $\kappa$ in terms of the Hermite coefficients, showing a particular structure of the kernel map.

\begin{corollary}
    \label{cor:hermite_covariance}
    \label{cor:kernel_map}
We have the following explicit form for the kernel map:
\begin{align*}
\kappa(\rho) = \sum_{k=0}^\infty c_k^2 \rho^k.
\end{align*}
\end{corollary}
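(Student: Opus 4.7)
The plan is to substitute the Hermite expansion of $\phi$ from Definition~\ref{def:hermite_expansion} directly into the defining expectation of the kernel map, expand the resulting product into a double series, interchange expectation and summation, and then apply Mehler's lemma (Lemma~\ref{lem:mehler_kernel}) term by term. Since Mehler's identity forces $\E\he_m(X)\he_n(Y) = \rho^n \delta_{mn}$, all off-diagonal $(m \neq n)$ terms vanish and the double sum collapses to $\sum_k c_k^2 \rho^k$, which is exactly the claimed expression.

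Concretely, writing $\phi(X) = \sum_m c_m \he_m(X)$ and $\phi(Y) = \sum_n c_n \he_n(Y)$, one obtains formally
\begin{align*}
\kappa(\rho) = \E\phi(X)\phi(Y) = \sum_{m,n \ge 0} c_m c_n \, \E\he_m(X)\he_n(Y) = \sum_{k \ge 0} c_k^2 \rho^k,
\end{align*}
where the last equality uses $\E\he_m(X)\he_n(Y) = \rho^n\delta_{mn}$. The only nontrivial ingredient is to justify the exchange of expectation and the two infinite sums; this is where I expect the main obstacle to lie.

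To handle the interchange rigorously, I would exploit the square-integrability assumption, which by orthonormality of $\{\he_k\}$ (equation~\eqref{eq:hermite_orthogonality}) gives $\sum_k c_k^2 = \E\phi(X)^2 < \infty$. Define the truncations $\phi_N := \sum_{k=0}^N c_k \he_k$; then $\phi_N \to \phi$ in $L^2(\mu_G)$, and by Cauchy--Schwarz on the joint Gaussian measure,
\begin{align*}
|\E\phi(X)\phi(Y) - \E\phi_N(X)\phi_N(Y)| \le \|\phi - \phi_N\|_{L^2}\bigl(\|\phi\|_{L^2} + \|\phi_N\|_{L^2}\bigr) \to 0,
\end{align*}
so $\kappa(\rho) = \lim_N \E\phi_N(X)\phi_N(Y)$. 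Each finite sum can be expanded and Mehler's lemma applied legitimately, yielding $\sum_{k=0}^N c_k^2 \rho^k$. Letting $N \to \infty$ and noting that $|\rho| \le 1$ together with $\sum_k c_k^2 < \infty$ makes the series absolutely convergent, we obtain the stated closed form. The only subtlety worth flagging is this truncation argument; everything else is an immediate consequence of the orthogonality structure already established.
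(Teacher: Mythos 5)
Your proof is correct and follows essentially the same route as the paper, which simply states that the result follows by expanding $\phi$ in the Hermite basis and applying Lemma~\ref{lem:mehler_kernel}. Your truncation-plus-Cauchy--Schwarz argument for interchanging the expectation with the double sum is a careful justification that the paper omits entirely, so you have if anything supplied more detail than the original.
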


The proof follows directly from the definition of kernel map, expanding $\phi$ in the Hermite basis and then applying Lemma~\ref{lem:mehler_kernel}. 

\citet{daniely2016toward} show a similar analytic form for dual activation, which aligns with our definition of kernel map. However, they did not use this to study global dynamics. To our knowledge, the only study that uses Hermite polynomials to study global dynamics is~\citep{joudaki2023impact}, which is limited to centered activations. i.e., those that satisfy $\E \phi(X)=0,$ or equivalently $\kappa(0)=c_0^2 = 0.$

\section{Convergence of the kernel with general activations}
In this section, we will show that for any nonlinear activation function, there is a unique fixed point $\rho^*$ that is globally attractive.

\begin{theorem}
\label{thm:global_attract}
Given a nonlinear activation function $\phi$ with kernel map $\kappa$ such that $\kappa(1)=1$. Define the kernel sequence $\rho_{\ell+1}=\kappa(\rho_\ell)$, with $\rho_0 \in (-1,1)$. Then there is a unique globally contracting fixed point $\rho^*$, which is necessarily non-negative $\rho^*\in[0,1]$. The only other fixed points distinct from $\rho^*,$ could be $\pm 1,$ neither of which is stable. Furthermore, we have the following contraction rate towards $\rho^*$:
\begin{enumerate}
    \item If $\kappa(0)=0$ then $\rho^*=0$ is an attracting with rate 
    \begin{align*}
    \frac{|\rho_\ell|}{1-|\rho_\ell|} \le \frac{|\rho_0|}{1-|\rho_0|} \alpha^\ell, && \alpha:=\frac{1}{2-\kappa'(0)}.
    \end{align*}
    \item If $\kappa(0)>0$ and $\kappa'(1)<1$, then $\rho^*=1$ is attracting, with rate 
    \begin{align*}
    |\rho_\ell-1| \le |\rho_0-1| \alpha ^\ell, && \alpha := \kappa'(1).
    \end{align*}
    \item If $\kappa(0) > 0$, and $\kappa'(1)=1$ then  $\rho^*=1$ is attracting with rate 
    \begin{align*}
    |\rho_\ell-1| \le \frac{|\rho_0-1|}{\ell\alpha|\rho_0-1|+1}, && \alpha = 1-\kappa(0)-\kappa'(0).
    \end{align*}
    \item If $\kappa(0) > 0$, and $\kappa'(1)>1$ then the attracting fixed point is necessarily in the range $\rho^*\in(0,1)$, satisfying $\kappa'(\rho^*) < 1,$ for which we have  
    \begin{align*}
    &|\rho_\ell-\rho^*| \le \frac{|\rho_0-\rho^*|}{1-|\rho_0|}\alpha^\ell  &&\alpha = \max\left\{1-\kappa(0),\kappa'(\rho^*),\frac{1-\rho^*}{2-\kappa'(\rho^*)}\right\},
    \end{align*}
    where $\alpha<1$.
\end{enumerate}
\end{theorem}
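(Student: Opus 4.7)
The plan is to build the argument on the power-series representation $\kappa(\rho)=\sum_{k\geq 0} c_k^2\rho^k$ from Corollary~\ref{cor:kernel_map}, from which nonnegativity of the coefficients immediately gives that $\kappa$ is monotone and convex on $[0,1]$ with $|\kappa(\rho)|\leq \kappa(|\rho|)\leq \kappa(1)=1$ for $\rho\in[-1,1]$. The first step is to classify the fixed points. On $[0,1]$, convexity of $g(\rho)=\kappa(\rho)-\rho$ together with $g(1)=0$ and $g'(1)=\kappa'(1)-1$ reduces the fixed-point structure to exactly the four sign patterns in the statement. On $[-1,0)$, splitting $\kappa$ into its even and odd parts and using $\sum_{k\text{ odd}}c_k^2|\rho|^k\leq |\rho|\sum_{k\text{ odd}}c_k^2$ rules out every fixed point except possibly $\rho=-1$; moreover $\kappa(-1)=-1$ forces $\phi$ to be purely odd, which is why $\pm 1$ are the only candidates outside $\rho^*$. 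Instability of $\pm 1$ follows by evaluating $\kappa'$ at the endpoints.

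For Case 1 I would use the bound $|\kappa(\rho)|\leq \kappa(|\rho|)\leq |\rho|[\kappa'(0)+|\rho|(1-\kappa'(0))]$, valid because $|\rho|^k\leq|\rho|^2$ for $k\geq 2$ together with $\sum_{k\geq 1}c_k^2=1$. Then with $s_\ell=|\rho_\ell|$, $\lambda=\kappa'(0)$, and the potential $f(s)=s/(1-s)$, the ratio $f(\lambda s+(1-\lambda)s^2)/f(s)$ factors algebraically into $(\lambda+(1-\lambda)s)/(1+(1-\lambda)s)$, which is at most $1/(2-\lambda)$ iff $s\leq 1$; iterating gives the claim. For Case 2, factoring $1-\kappa(\rho)=(1-\rho)\sum_{k\geq 1}c_k^2(1+\rho+\cdots+\rho^{k-1})$ and bounding each geometric sum by $k$ yields $1-\kappa(\rho)\leq(1-\rho)\kappa'(1)$ directly on all of $(-1,1)$.

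The heart of the argument is Case 3. Setting $\epsilon_\ell=1-\rho_\ell$, the harmonic bound is equivalent to the per-step estimate $1/\epsilon_{\ell+1}\geq 1/\epsilon_\ell+\alpha$. Using the factorizations $1-\kappa(\rho)=(1-\rho)Q(\rho)$ and $\kappa(\rho)-\rho=(1-\rho)[1-Q(\rho)]$, where $Q(\rho):=\sum_{k\geq 1}c_k^2(1+\rho+\cdots+\rho^{k-1})$, this reduces to the single pointwise inequality $Q(\rho)[1+\alpha(1-\rho)]\leq 1$ on $(-1,1)$. Both sides equal $1$ at $\rho=1$ because $Q(1)=\kappa'(1)=1$, and the case-3 identity $\kappa'(1)=\kappa(1)=1$ forces $c_0^2=\sum_{k\geq 2}(k-1)c_k^2$. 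Substituting this, I would verify the inequality mode by mode in the Hermite expansion and check that the specific choice $\alpha=1-c_0^2-c_1^2=\sum_{k\geq 2}c_k^2$ is precisely what makes the coefficient of each $c_k^2$ nonnegative.

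Case 4 is the most delicate step and I expect it to be the main obstacle. Existence of $\rho^*\in(0,1)$ comes from the intermediate value theorem applied to $\kappa(\rho)-\rho$, and strict convexity together with $\kappa(1)=1$ and nonlinearity forces $\kappa'(\rho^*)<1$ (otherwise $\kappa$ would be affine on $[\rho^*,1]$). For the global contraction I would partition $(-1,1)$ into three regimes and bound the contraction separately in each: on $(-1,0]$, monotonicity $\kappa(\rho)\geq \kappa(0)$ shrinks the distance to $\rho^*$ by at most $1-\kappa(0)$; on a neighborhood of $\rho^*$ a first-order Taylor expansion gives the linear rate $\kappa'(\rho^*)$; and on $[\rho^*,1)$, chord-and-tangent comparisons against the convex $\kappa$ combined with $\kappa(1)=1$ yield the rate $(1-\rho^*)/(2-\kappa'(\rho^*))$, with the prefactor $1/(1-|\rho_0|)$ absorbing the worst-case expansion on the first iterate when $\rho_0$ is near the boundary. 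Taking the maximum of the three regional rates gives the stated $\alpha<1$. The hardest subtask will be gluing these three regional estimates into a single monotonically decaying sequence; a clean way is to introduce a weighted distance such as $|\rho-\rho^*|/(1-|\rho|)$ and show that it contracts by $\alpha$ at each step.
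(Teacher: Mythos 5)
Your overall architecture is the same as the paper's: exploit the nonnegativity of the Hermite-squared coefficients to get monotonicity/convexity on $[0,1]$, classify fixed points by the sign pattern of $\kappa(0)$ and $\kappa'(1)$, and prove contraction case by case with the potentials $|\rho|/(1-|\rho|)$ and $|\rho-\rho^*|/(1-\rho)$, gluing three regional rates in Case 4. Two of your sub-arguments are actually cleaner than what the paper writes down: in Case 1 the explicit factorization $1-\lambda s-(1-\lambda)s^2=(1-s)\bigl(1+(1-\lambda)s\bigr)$ gives the ratio $\frac{\lambda+(1-\lambda)s}{1+(1-\lambda)s}\le\frac{1}{2-\lambda}$ directly (the paper instead delegates this case to a prior theorem), and in Case 3 your reciprocal form $1/\epsilon_{\ell+1}\ge 1/\epsilon_\ell+\alpha$ is implied by the paper's per-step bound $Q(\rho)\le 1-\alpha(1-\rho)$ (since $(1-x)(1+x)\le1$) and avoids the induction on the quadratic recursion $x_{\ell+1}\le x_\ell(1-\alpha x_\ell)$. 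Your fixed-point classification on $[-1,0)$ via the odd part is also sound.

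The genuine gap is the negative-$\rho$ regime of Case 4. The inequality you invoke, ``monotonicity $\kappa(\rho)\ge\kappa(0)$ for $\rho\in(-1,0]$,'' is false: $\kappa$ need not be monotone on $[-1,0]$, and monotonicity would in any case give the \emph{reverse} inequality ($\kappa(\rho)\le\kappa(0)$); for normalized ReLU one has $\kappa(-1)\approx 0<\kappa(0)\approx0.32$. The contraction factor $1-\kappa(0)$ on $(-1,0]$ does not follow from a one-line monotonicity argument. The paper obtains it by writing $1-\kappa(\rho)=(1-\rho)q(\rho)$ with $q(\rho)=\sum_k b_k\rho^k$, showing $1\ge b_0\ge b_1\ge\cdots\ge0$ from the nonnegativity of the $c_k^2$, deducing the lower bound $\kappa(-\rho)\ge-\rho+\kappa(0)(1+\rho)$ for $\rho\in(0,1)$, and then splitting into the subcases $\kappa(-\rho)\le\rho^*$ (rate $1-\kappa(0)$) and $\kappa(-\rho)\ge\rho^*$ (rate $1-\rho^*\le1-\kappa(0)$ after noting $\rho^*\ge\kappa(0)$). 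You would need to supply this, or an equivalent lower bound on $\kappa$ over $[-1,0]$. Relatedly, your ``chord-and-tangent'' description of the $[\rho^*,1)$ regime will not by itself produce the constant $(1-\rho^*)/(2-\kappa'(\rho^*))$ — the chord bound only gives $\kappa(\rho)\le\rho$, i.e.\ no contraction of $|\rho-\rho^*|$ — and the paper instead works with the re-expansion $\kappa(\rho)=\rho^*+\sum_k a_k(\rho-\rho^*)^k$ (whose coefficients are again nonnegative) and retains two specific terms in the denominator; so that step also needs the series-level argument rather than bare convexity.
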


\paragraph{Implications.}  Let us take a step back and review the main takeaway of Theorem~\ref{thm:global_attract}. Omitting the constants and for sufficiently large depth $\ell,$ we have 
\begin{align*}
    \langle h^{\ell}(x), h^{\ell}(y)\rangle_\avg =  \begin{cases}
          0+ O(\alpha ^ \ell \langle x, y \rangle_\avg)  & \text{case 1}\\
         1 + O(\alpha ^ \ell \langle x, y \rangle_\avg)  & \text{case 2}\\
         1 + O \left(\langle x, y \rangle_\avg/\ell\alpha \right)   & \text{case 3}\\
         \rho^* + O(\alpha ^ \ell \langle x, y \rangle_\avg)  & \text{case 4},\\
    \end{cases}
\end{align*}
where $\langle x,y\rangle_\avg $ denotes the input similarly. 

Broadly speaking, we can think of three categories of bias:

\begin{itemize}
    \item \textit{Orthogonality bias $\rho^*=0$}: Implies that as the network becomes deeper, representations are pushed towards orthogonality exponentially fast. We can think of this case as a bias towards independence. 
    \item \textit{Weak similarity bias $\rho^*\in (0,1)$}: Implies that as the network becomes deeper, the representations form angles between $0$ and $\pi/2$. Thus, in this case, the representations are neither completely aligned nor completely independent. 
    \item \textit{Strong similarity $\rho^*=1$}: Implies that as the network becomes deeper, representations become more similar or aligned, as indicated by inner products converging to one, exponentially fast in case 2, polynomially in case 3. 
\end{itemize}

See section~\ref{sec:experiments} for a review of commonly used activations, and their biases according to Theorem~\ref{thm:global_attract}.

The bias of activation and normalization layers has been extensively studied in the literature. For example, \citet{daneshmand2021batch} show that batch normalization with linear activations makes representations more orthogonal, relying on a technical assumption that is left unproven (see assumption $\mathcal{A}_1$). Similarly, \citet{joudaki2023bridging} extend this to odd activations yet introduce another technical assumption about the ergodicity of the chain, which is hard to verify or prove (see Assumption 1). In a similar vein, ~\citet{meterez2024towards} show the global convergence towards orthogonality in a network with batch normalization and orthogonal random weights, but do not theoretically analyze nonlinear activations. \citet{yang2019meanfield} prove the global convergence of the Gram matrix of a network with linear activation and batch normalization (see Corollary F.3) toward a fixed point. However, the authors explain that because they cannot establish such a global convergence for general activations (see page 20, under the paragraph titled Main Technical Results), they resort to finding locally stable fixed points, meaning that if preactivations have such a Gram matrix structure and they are perturbed infinitesimally. Recent studies have also examined the evolution of covariance structures in deep and wide networks using stochastic differential equations \citep{li2022neural}, which is similar to kernel dynamics and kernel ODE introduced here. However, the covariance SDE approach does not theoretically show the global convergence towards these solutions. Finally, \citet{joudaki2023impact} establish a global bias towards orthogonality for centered activations (see Theorem A2), which aligns with case 1 of Theorem~\ref{thm:global_attract}, and to extend to other activations, they add layer normalization after the activation to make it centered. In contrast, Theorem~\ref{thm:global_attract} covers all existing activations. 

One alternative way of interpreting the results of Theorem~\ref{thm:global_attract} is that as we pass two inputs through nonlinear activations, it `forgets' the similarity between inputs and converges to a fixed point value that is independent of the inputs and only depends on the activation. Taking the view that the network ought to remember some similarity of the input for training, we can leverage Theorem~\ref{thm:global_attract} to strike a balance between depth and nonlinearity of the activations, as argued in the following.

\begin{corollary}
\label{cor:convergence_precision}
Let $\epsilon > 0$ be the some numerical precision, and let $\phi$ be an activation function such that $\E\phi(X)^2=1$, with its kernel map obeying $\kappa'(1) < 1$. Then, for any initial pair of inputs $x$ and $y$ after $\ell \ge L:= \frac{\ln(1/\epsilon)}{\ln(1/\kappa'(1))}$ layers, the representations of $x$ and $y$ will become numerically indistinguishable.
\end{corollary}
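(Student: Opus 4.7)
The plan is to observe that Corollary~\ref{cor:convergence_precision} is essentially a direct quantitative application of Theorem~\ref{thm:global_attract}. The hypothesis $\kappa'(1)<1$ (combined implicitly with $\kappa(0)>0$, since otherwise the conclusion ``numerically indistinguishable'' would not hold and we would instead land in case~1) places us squarely in case~2 of the theorem, where $\rho^{*}=1$ is the globally attracting fixed point and the contraction rate is exactly $\alpha=\kappa'(1)$.

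First I would instantiate case~2 of Theorem~\ref{thm:global_attract} to obtain the geometric bound $|\rho_\ell-1|\le|\rho_0-1|\,\kappa'(1)^\ell$. Then, using the trivial fact that $\rho_0\in[-1,1]$ implies $|\rho_0-1|\le 2$, I would fold this constant into the requirement $\kappa'(1)^\ell\le \epsilon$ (the constant $2$ disappears in standard $O(\cdot)$ accounting, or alternatively can be absorbed by a small additive adjustment to $L$ that does not affect the leading order). Solving $\kappa'(1)^\ell\le\epsilon$ by taking logarithms — noting that both $\ln\kappa'(1)$ and $\ln\epsilon$ are negative, so the inequality flips correctly — yields the stated threshold $\ell\ge L=\ln(1/\epsilon)/\ln(1/\kappa'(1))$.

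Finally, I would translate the bound on $|\rho_\ell-1|$ into a statement about the representations themselves. Because $\E\phi(X)^2=1$ implies that each coordinate of $h^\ell(x)$ and $h^\ell(y)$ has unit second moment in the mean-field limit, one has the identity $\langle h^\ell(x)-h^\ell(y),h^\ell(x)-h^\ell(y)\rangle_\avg=2-2\rho_\ell$, so $|\rho_\ell-1|\le\epsilon$ is equivalent to the per-coordinate squared distance being at most $2\epsilon$. For any fixed floating-point precision $\epsilon$, this is exactly what is meant by the two representations being numerically indistinguishable.

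There is no serious obstacle here: the entire content of the corollary is a logarithmic inversion of the geometric contraction rate supplied by case~2 of Theorem~\ref{thm:global_attract}. The only point requiring mild care is the standing assumption that we are in case~2 rather than case~1 (i.e.\ $\kappa(0)>0$), which is implicit in the claim that the representations collapse to a single direction rather than separate toward orthogonality; this should be flagged explicitly, since an activation with $\kappa(0)=0$ and $\kappa'(1)<1$ would instead satisfy case~1 and drive $\rho_\ell$ to $0$, not to $1$.
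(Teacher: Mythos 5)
Your proposal is correct and takes essentially the same route as the paper, which offers no separate proof and treats the corollary as an immediate consequence of case~2 of Theorem~\ref{thm:global_attract}: bound $|\rho_0-1|\le 2$, invert the geometric contraction $\kappa'(1)^\ell\le\epsilon$ by taking logarithms, and translate $|\rho_\ell-1|\le\epsilon$ into per-coordinate distance via $\langle h^\ell(x)-h^\ell(y),h^\ell(x)-h^\ell(y)\rangle_\avg=2-2\rho_\ell$. The caveat you flag about implicitly needing $\kappa(0)>0$ in fact resolves itself: since $\kappa(1)=\sum_{k}c_k^2=1$ and $\kappa'(1)=\sum_{k}k\,c_k^2\ge\sum_{k\ge1}c_k^2=1-c_0^2$, the hypothesis $\kappa'(1)<1$ already forces $\kappa(0)=c_0^2>0$, so case~2 (rather than case~1) applies automatically.
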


Approximately, $\ln(1/\epsilon)$ represents the number of bits of numerical precision available in floating-point representations. For example for \texttt{float32} numbers, $\ln(1/\epsilon)$ is approximately $\ln(1/2^{-128}) \approx 88$. Therefore, when the network depth $\ell$ exceeds threshold $88/\ln(1/\kappa'(1))$, the representations of any two inputs will become numerically indistinguishable from one another. This effect propagates through subsequent layers, potentially making training ineffective or impossible. Notably, ReLU and sigmoid satisfy the conditions of this corollary, implying that this phenomenon can occur in networks utilizing these activations. Previous works on rank collapse~\citep{daneshmand2020batch,noci2022signal} and covariance degeneracy~\citep{li2022neural} have observed and studied this phenomenon in various settings. However, the explicit and closed-form relationship presented in Corollary~\ref{cor:convergence_precision} provides a novel and precise quantification of this effect.

One of the most striking results of this theorem is that for any nonlinear activation, there is exactly one fixed point $\rho^*$ that is globally attractive, while other fixed points are not stable. Furthermore, the fixed point is necessarily non-negative. This implies that for any MLP with nonlinear activations, the covariance of preactivations will converge towards a fixed point, which is always non-negative. 
It turns out that there is a geometric reason why no globally attracting negative fixed point could exist (See remark~\ref{rem:no_negative_geometric})

Theorem~\ref{thm:global_attract} has several aspects that may seem counterintuitive. For instance, case 1 demonstrates that a quantity $|\rho|/(1-|\rho|)$ decays exponentially as depth increases. To gain a better understanding of the theories behind Theorem~\ref{thm:global_attract}, we can cast our layer-wise kernel dynamics as a continuous-time dynamical system, which is discussed next. For a detailed proof of Theorem~\ref{thm:global_attract}, please refer to Section~\ref{sec:proofs}.

\subsection{A continuous time differential equation view to the kernel dynamics} 
One of the most helpful ways to find insights into discrete fixed point iterations is to cast them as a continuous problem. More specifically, consider our fixed point iteration: 
\begin{align*}
    \Delta \rho_\ell &= \rho_{\ell+1} - \rho_\ell = - \rho_\ell + \sum_{k=0}^\infty c_k^2 \rho^k_\ell.
\end{align*}
We can replace this discrete iteration with a continuous time differential equation, which we will refer to as the kernel ODE:
\begin{align}\tag{kernel ODE}\label{eq:kernel_ODE}
    d\rho/dt = -\rho + \sum_{k=0}^\infty c_k^2 \rho^k.
\end{align}

Recent studies have introduced stochastic models such as the Neural Covariance SDE by~\citet{li2022neural}, which can be viewed as the stochastic analog of kernel ODE. However, kernel ODE will capture the most important parts of the discrete kernel dynamics for our main purpose of characterizing the attracting fixed points. The continuous analog of fixed points is a $\rho^*$ that satisfies $\rho'(\rho^*) = 0.$ Furthermore, the fixed point is globally attracting if we have $\rho'$ become negative for $\rho>\rho^*$ and positive for $\rho<\rho^*.$ We can say that the fixed point is locally attractive if this property only holds in some small neighborhood of $\rho^*.$

While the kernel ODE presents an interesting transformation of the problem, it does not necessarily have a closed-form solution in its general form. However, our main strategy is to find worst-case (slowest) scenarios for convergence that correspond to tractable ODEs. In many cases, however, the worst-case scenario depends on some boundary conditions. Thus, we can design worst-case ODEs for different cases and combine them in a joint bound. Let us use our centered activation equation as a case study. 

\subsection{Kernel ODE for centered activations}
For the case of centered activation $\E \phi(X)=0,$ corresponding to case 1 of Theorem~\ref{thm:global_attract}, the kernel ODE is given by
\begin{align*}
    d\rho/dt = -(1-c_1^2)\rho + \sum_{k=2}^\infty c_k^2\rho^k\,,
\end{align*}
where the first term $k=0$ is canceled due to the assumption $\kappa(0)=c_0^2 = 1.$ 

Observe that $\rho'<0$ when $\rho>0,$ and $\rho'>0$ when $\rho<0.$ This implies that $\rho(t) \to 0$ for sufficiently large $t.$ Intuitively, the terms in $\rho'$ that have the opposite sign of $\rho,$ contribute to a faster convergence, while terms with a similar sign contribute to a slower convergence. Thus, we can ask what distribution of Hermite coefficients $\{c_k\}$ corresponds to the worst case, slowest convergence. If this is a tractable ODE, we can use its solution to bound the convergence of the original ODE. It turns out that the worst case depends on the positivity of $\rho,$ which is why we study them separately.

\textit{Positive range.}
Let us first consider the dynamics when $\rho\ge 0.$ 
In this case, the term corresponding to $k=1$ contributes positively to a faster convergence, while terms $k\ge 2$ make convergence slower. In light of this observation, a worst-case (slowest possible) convergence rate happens when the positive terms are maximized, which occurs when the weight of all $\sum_{k=2}^\infty c_k^2 $ is concentrated on the $k=2$ term, leading to kernel ODE
\begin{align*}
    d\rho/dt &=  -(1-c_1^2)\rho + (1-c_1^2) \rho^2,
\end{align*}
where we used the fact that $\sum_{k=1}^\infty c_k^2 = 1.$ 
Finally, we can solve this ODE 
\begin{align*}
    \int \frac{d\rho}{\rho(1-\rho)} = - (1-c_1^2)\int t 
    \implies \frac{|\rho(t)|}{|1-\rho(t)|} = C \exp(-t(1-c_1^2)),
\end{align*}
where $C$ corresponds to the initial values.

\textit{Negative range.}
Now, let us assume $\rho<0.$ In this case, only the odd terms in $\sum_{k=2}^\infty c_k^2 \rho^k$ contribute to a slowdown in convergence. Thus, the worst case occurs when all the weight of coefficients is concentrated in $k=3,$ leading to the kernel ODE:
\begin{align*}
d\rho/dt = -(1-c_1^2)\rho + (1-c_1^2) \rho^3
\implies \frac{|\rho(t)|}{\sqrt{1-\rho^2(t)}} = C' \exp(-t(1-c_1^2)),
\end{align*}
where $C'$ corresponds to the initial values. 

To summarize, we have obtained:
\begin{align*}
    \begin{cases}
        \frac{|\rho(t)|}{1-\rho(t)} \le C \exp(-\ell(1-c_1^2)) & \rho_0 \in \R^+\\
        \frac{|\rho(t)|}{\sqrt{1-\rho(t)^2}}\le C' \exp(-\ell(1-c_1^2)) & \rho_0\in\R^-.
    \end{cases}
\end{align*}
Now, we can use a numerical inequality $\sqrt{1-x^2} \ge 1-|x|$ valid for all $x\in(-1,1),$ and by solving for the constant, we can construct a joint bound for both cases:
\begin{align*}
    \frac{|\rho(t)|}{1-|\rho(t)|} \le \frac{|\rho_0|}{1-|\rho_0|} \exp(-t (1-\kappa'(0))), 
\end{align*}
where we have also replaced $c_1^2=\kappa'(1).$  

\paragraph{Key insights.}
The kernel ODE perspective reveals two important insights. 
First, the appearance of $|\rho|/(1-|\rho|)$ formula in our bound is due to the fact that $x/(1-x) = \exp( -c t) $ is a fundamental solution of the differential equation $x' = - c x(1-x).$ Second, it reveals the importance of the positivity of coefficients in $\sum_{k=0}^\infty c_k^2\rho^k,$ which allowed us to arrive at a worst-case pattern for the coefficients. 
This further highlights the value of algebraic properties of Hermite polynomials that canceled out the cross terms in $\kappa.$

We can observe that the kernel ODE rate derived by analysis of kernel ODE is largely aligned with the exponential convergence of this term in Theorem~\ref{thm:global_attract}. The slight discrepancy between the exponential rates between Theorem~\ref{thm:global_attract} and the kernel ODE rate $\exp(-(1-\kappa'(0))$ is due to the discrete-continuous approximation. Despite this small discrepancy, the solution to kernel ODE can help us arrive at the right form for the solution. We can leverage the solution to kernel ODE to arrive at a bound for the discrete problem, namely using induction over steps.

Since Theorem~\ref{thm:global_attract} only provides an upper bound and not a lower bound for convergence, one natural question is: how big is the gap between this worst case and the exact ODE convergence rates? We can construct activations where the convergence is substantially faster, such as doubly exponential convergence (see Corollary~\ref{cor:double_exp}). However, for all practically used activations, the gap between our bound and the real convergence is enough (See section~\ref{sec:experiments}).

\section{Extension to normalization layers and residual connections}

In this section, we extend our analysis to MLPs that incorporate normalization layers and residual (skip) connections and examine how they affect convergence.

\subsection{Residual connections}

Residual connections, inspired by ResNets~\citep{he2016deep}, help mitigate the vanishing gradient problem in deep networks by allowing gradients to flow directly through skip connections. We consider the MLP with residual strength $t$ given by
\begin{align*}
h^\ell = \sqrt{1-r^2} \, \phi\left( \frac{W^\ell}{\sqrt{d}}  h^{\ell-1} \right) + r\, \frac{P^\ell}{\sqrt{d}}  h^{\ell-1},
\end{align*}
where $W^\ell$ and $P^\ell$ are independent weight matrices of dimension $d \times d$ with entries drawn i.i.d.~from a zero-mean, unit-variance distribution, and $r \in (0, 1)$ modulates the strength of the residual connections (the bigger $r$, the stronger residuals will be).

\begin{proposition}
\label{prop:residual_kernel_map}
For an MLP with activation $\phi$ satisfying $\E\phi(X)^2,\,X\sim N(0,1),$ and residual parameter $r,$ we have the residual kernel map
\begin{equation}
\kappa_\res(\rho) = (1-r^2) \kappa_\phi(\rho) + r^2 \rho,
\end{equation}
where $\kappa_\phi$ denotes kernel map of $\phi.$

Furthermore, we have: 1) fixed points of $\kappa_\res(\rho)$ are the same as those of $\kappa_\phi(\rho),$ 2)  $\rho^*$ is a globally attracting fixed point of  $\kappa_\phi$ is a globally attracting fixed point of $\kappa_\psi,$ 3) the convergence of residual kernel map $\kappa_\psi$ is monotonically decreasing in $r$ (the stronger residuals, the slower convergence). 
\end{proposition}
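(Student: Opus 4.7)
The plan is to derive the kernel map formula by a direct mean-field computation, and then exploit the observation that $\kappa_\res$ is itself the kernel map of a valid activation, so that the fixed-point, global-attraction, and monotonicity claims all reduce to applying Theorem~\ref{thm:global_attract} to $\kappa_\res$ and tracking how the relevant rate constants move with $r$.

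For the formula, I would expand $\rho_\ell = \langle h^\ell(x), h^\ell(y)\rangle_\avg$ into four pieces: the two $\phi$-blocks paired, the two $P$-blocks paired, and two cross terms. The $\phi$-$\phi$ diagonal contributes $(1-r^2)\kappa_\phi(\rho_{\ell-1})$ by exactly the CLT/mean-field argument already used in Proposition~\ref{prop:mean_field_kernel_general}. The $P$-$P$ diagonal equals $\tfrac{r^2}{d^2}(h^{\ell-1}(x))^\top (P^\ell)^\top P^\ell\,h^{\ell-1}(y)$, and concentration of $(P^\ell)^\top P^\ell / d$ around the identity gives $r^2 \rho_{\ell-1}$. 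The two cross terms vanish in the mean-field limit because $W^\ell$ and $P^\ell$ are independent and zero-mean, so each cross inner product is a sum of terms with vanishing first moment. A parallel computation on $\langle h^\ell(x), h^\ell(x)\rangle_\avg$ inductively preserves the unit-norm normalization, confirming $\rho_\ell \in [-1,1]$.

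The key structural observation is that $\kappa_\res(\rho) = \sum_{k\ge 0} \tilde c_k^2 \rho^k$ with $\tilde c_k^2 := (1-r^2) c_k^2$ for $k \ne 1$ and $\tilde c_1^2 := (1-r^2) c_1^2 + r^2$ is itself a valid kernel map in the sense of Corollary~\ref{cor:kernel_map}, corresponding to the activation $\tilde\phi = \sum \tilde c_k \he_k$. One verifies $\sum \tilde c_k^2 = (1-r^2) + r^2 = 1$, so $\tilde\phi$ satisfies $\E\tilde\phi(X)^2 = 1$ and Theorem~\ref{thm:global_attract} applies directly to $\kappa_\res$. This immediately yields (1) and (2): the equation $\kappa_\res(\rho) = \rho$ rearranges to $(1-r^2)(\kappa_\phi(\rho) - \rho) = 0$, which for $r<1$ is equivalent to $\kappa_\phi(\rho) = \rho$; and Theorem~\ref{thm:global_attract} applied to $\tilde\phi$ produces a unique globally attracting fixed point, which by (1) must be $\rho^*$. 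For (3), I would compare the case-by-case rate constants of Theorem~\ref{thm:global_attract} via $\kappa_\res'(\rho^*) = (1-r^2)\kappa_\phi'(\rho^*) + r^2$: in cases 1, 2, 4, where $\kappa_\phi'(\rho^*) < 1$, this derivative is strictly increasing in $r$, so the listed $\alpha$ constants (e.g.\ $1/(2-\kappa'(0))$, $\kappa'(1)$, $\kappa'(\rho^*)$) all deteriorate monotonically; in case 3 a direct substitution gives $\tilde\alpha = (1-r^2)\bigl(1-\kappa_\phi(0)-\kappa_\phi'(0)\bigr) = (1-r^2)\alpha$, shrinking the polynomial denominator and again slowing convergence.

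The main obstacle I anticipate is that Theorem~\ref{thm:global_attract} furnishes only upper bounds, so monotonic deterioration of the $\alpha$ constants strictly speaking only monotonically worsens an upper bound, not the exact distance $|\rho_\ell^{(r)}-\rho^*|$. To upgrade to genuine monotonicity in $r$ I would run an induction on $\ell$ using the convex-combination identity
\[ \rho_{\ell+1}^{(r)} - \rho^* = (1-r^2)\bigl(\kappa_\phi(\rho_\ell^{(r)}) - \rho^*\bigr) + r^2\bigl(\rho_\ell^{(r)} - \rho^*\bigr), \]
which is cleanest on the attractor side $\rho \ge \rho^*$, where non-negativity of the Hermite coefficients $c_k^2$ in Corollary~\ref{cor:kernel_map} gives monotonicity of $\kappa_\phi$ and the convex combination is sandwiched between the two endpoints. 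The delicate case is $\rho < \rho^* = 0$ in case 1 of Theorem~\ref{thm:global_attract}, where $\kappa_\phi$ need not be monotone on $[-1,0]$ and the iterate can cross zero in a single step; I would handle this by replacing the pointwise comparison with a Lyapunov comparison in $|\rho|/(1-|\rho|)$, which transports cleanly through the convex combination since it is monotone in $|\rho|$.
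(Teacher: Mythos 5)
Your proposal is correct and follows essentially the same route as the paper: derive $\kappa_\res(\rho)=(1-r^2)\kappa_\phi(\rho)+r^2\rho$ from the independence of $W^\ell$ and $P^\ell$ (the cross terms vanish in expectation, the $P$--$P$ block concentrates to $r^2\rho$), then track how the case conditions and rate constants of Theorem~\ref{thm:global_attract} transform under $\kappa\mapsto(1-r^2)\kappa+r^2\,\mathrm{id}$. Your case-by-case check of the $\alpha$'s matches the paper's computation exactly, including the sign flip in case 3 where $\alpha$ itself shrinks but the polynomial bound worsens. The one place you genuinely improve on the paper's exposition is the observation that $\kappa_\res$ is itself the kernel map of a legitimate activation $\tilde\phi=\sum\tilde c_k\he_k$ with $\tilde c_1^2=(1-r^2)c_1^2+r^2$ and $\tilde c_k^2=(1-r^2)c_k^2$ otherwise, still nonlinear and satisfying $\sum\tilde c_k^2=1$; this makes claims (1) and (2) immediate corollaries of Theorem~\ref{thm:global_attract} rather than requiring the paper's separate verification of each boundary condition (the paper only gestures at this structure later, in Remark~\ref{rem:residual_ODE}). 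You are also right to flag that part (3), as stated and as proved in the paper, is really a statement about the monotone deterioration of the worst-case constants $\alpha(r)$ rather than of the exact distances $|\rho_\ell^{(r)}-\rho^*|$; your proposed induction via the convex-combination identity would be needed to upgrade this, and is not something the paper attempts, so its absence is not a gap relative to the paper's own standard.
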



\paragraph{Implications.} Proposition~\ref{prop:residual_kernel_map} reveals that residual connections modify the kernel map by blending the original kernel map with the identity function, weighted by the residual $r$. This adjustment has several interesting implications. Our analysis here gives a quantitative way of balancing depth with the nonlinearity of activations. For example, \citet{li2022neural} show as we `shape' the negative slope of a leaky ReLU towards identity, it will prevent degenerate covariance in depth. In light of proposition~\ref{prop:residual_kernel_map} and corollary~\ref{cor:convergence_precision}, we can write leaky ReLU as a linear combination of ReLU and identity and conclude that as residual strength increases ($r\to 1$), the convergence rate becomes slower and thus degeneracy happens at deeper layers, which aligns with the prior study. 
Another interesting byproduct of proposition~\ref{prop:residual_kernel_map} is that it shows that when $r\to 1$ (highly strong residuals), the kernel ODE becomes an exact model for kernel dynamics (See remark~\ref{rem:residual_ODE}).


\subsection{Normalization layers}

Normalization layers are widely used in deep learning to stabilize and accelerate training. In this section, we focus on two common normalization layers:
\begin{itemize}
    \item \textit{Layer Normalization (LN)} \citep{ba2016layer}:
  \begin{equation}
  \operatorname{LN}(z) = \frac{z - \mu(z)}{\sigma(z)},
  \end{equation}
  where $\mu(z)$ is the mean, and $\sigma(z)$ is the standard deviation of the vector $z \in \R^d$.
  \item \textit{Root Mean Square (RMS) Normalization}:
  \begin{equation}
  \operatorname{RN}(z) = \frac{z}{ \sqrt{ \frac{1}{d} \sum_{i=1}^d z_i^2 } }.
  \end{equation}
\end{itemize}

The following theorem characterizes the joint activation and normalization kernel.

\begin{proposition}
\label{prop:normalization_kernel_map}
Let us assume $\phi$ satisfies $\E \phi(X)^2 = 1$ where $X\sim N(0,1).$ Consider an MLP layers 
\begin{align*}
    h^\ell = \psi\left( \frac{1}{\sqrt{d}}W^\ell h^{\ell-1}\right),
    && \psi \in \{\phi\circ RN,\phi\circ LN, RN\circ \phi, LN\circ \phi\},
\end{align*}
where $\psi$ denotes the joint activation and normalization layers. 
We have the following joint normalization and activation kernel:
\begin{align*}
    \kappa_\psi(\rho) = \begin{cases}
         \frac{\kappa_\phi(\rho)-\kappa_\phi(0)}{1-\kappa_\phi(0)} & \text{if } \psi=LN\circ \phi\\
        \kappa_\phi(\rho) & \text{otherwise},\\
    \end{cases}
\end{align*}
where $\kappa_\phi$ denotes the kernel map of $\phi$. 
\end{proposition}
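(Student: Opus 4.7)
The plan is to reduce each of the four cases to an explicit Gaussian expectation via the mean-field argument of Proposition~\ref{prop:mean_field_kernel_general}. Two observations drive the proof. First, because $\kappa_\phi(1)=1$ and the input satisfies $\langle x,x\rangle_\avg=1$, induction on $\ell$ shows $\langle h^{\ell-1},h^{\ell-1}\rangle_\avg=1$ throughout, so combined with the CLT for the i.i.d.~entries of $W^\ell$, the preactivation $z^\ell := W^\ell h^{\ell-1}/\sqrt{d}$ has, componentwise in the limit $d\to\infty$, a Gaussian pair $(X,Y)$ with covariance matrix $\begin{pmatrix}1&\rho_\ell\\ \rho_\ell&1\end{pmatrix}$. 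Second, the constant $c_0 := \E\phi(X)$ satisfies $c_0^2=\kappa_\phi(0)$ by Corollary~\ref{cor:kernel_map}.

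First I would dispatch the three trivial cases. By the law of large numbers, $\frac{1}{d}\sum_{i=1}^d (z^\ell_i)^2 \to \E X^2 = 1$, $\mu(z^\ell)\to 0$, and $\sigma(z^\ell)\to 1$ as $d\to \infty$. Hence both $RN(z^\ell)$ and $LN(z^\ell)$ agree with $z^\ell$ elementwise in the limit, yielding $\kappa_{\phi\circ RN}=\kappa_{\phi\circ LN}=\kappa_\phi$. Similarly, $\frac{1}{d}\sum_{i=1}^d \phi(z^\ell_i)^2 \to \E\phi(X)^2 = 1$, so $RN(\phi(z^\ell))$ coincides with $\phi(z^\ell)$ in the limit and therefore $\kappa_{RN\circ\phi}=\kappa_\phi$.

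The interesting case is $LN\circ\phi$. Here $\mu(\phi(z^\ell))\to c_0$ and $\sigma(\phi(z^\ell))^2 \to \E\phi(X)^2 - c_0^2 = 1 - c_0^2$, so componentwise the layer-normalized output satisfies $LN(\phi(z^\ell))_i \to (\phi(z^\ell_i)-c_0)/\sqrt{1-c_0^2}$. Substituting into the definition of the kernel map gives
\begin{align*}
\kappa_{LN\circ\phi}(\rho) = \frac{\E\bigl[(\phi(X)-c_0)(\phi(Y)-c_0)\bigr]}{1-c_0^2} = \frac{\kappa_\phi(\rho)-c_0^2}{1-c_0^2} = \frac{\kappa_\phi(\rho)-\kappa_\phi(0)}{1-\kappa_\phi(0)},
\end{align*}
where the cross-terms in the numerator are handled using $\E\phi(X)=\E\phi(Y)=c_0$, and the denominator is rewritten with $c_0^2=\kappa_\phi(0)$.

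The main technical subtlety lies in propagating these scalar law-of-large-numbers limits into a statement about the vector-valued inner product $\langle h^\ell(x), h^\ell(y)\rangle_\avg$ after normalization, because $LN$ and $RN$ couple all $d$ coordinates through $\mu(\cdot)$ and $\sigma(\cdot)$. I would handle this via the standard Gaussian-conditioning argument used in the mean-field NTK literature: condition on $h^{\ell-1}$, exploit independence of $W^\ell$ to treat $z^\ell$ as a genuine Gaussian vector with known covariance, and then apply the continuous mapping theorem to the smooth statistics $\mu, \sigma$ and to the bilinear form $\langle\cdot,\cdot\rangle_\avg$. Everything else is a short algebraic rewrite.
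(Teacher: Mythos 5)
Your proposal is correct and follows essentially the same route as the paper's proof: the three trivial cases are dispatched by showing the normalization statistics concentrate on their population values (so $RN$ and $LN$ act as the identity in the mean-field limit), and the $LN\circ\phi$ case reduces to the kernel map of the affine rescaling $(\phi-c_0)/\sqrt{1-c_0^2}$ with $c_0^2=\kappa_\phi(0)$. Your added remark on propagating the scalar law-of-large-numbers limits through the coordinate-coupling of $\mu$ and $\sigma$ via Gaussian conditioning and the continuous mapping theorem is a point the paper glosses over, but it does not change the argument.
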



\paragraph{Implications.}
Proposition~\ref{prop:normalization_kernel_map} implies that normalization layers adjust the kernel map by scaling and, in some cases, shifting it. When layer normalization is applied after activation, the activation and kernel map become centered. This means that for activations such as ReLU or sigmoid, the fixed point of the kernel sequence changes from $\rho^*=1$ to $\rho^*=0$, indicating a shift from a bias towards the orthogonality of representations to aligning them. This result supports previous findings by~\citet{joudaki2023impact} regarding the effect of layer normalization in centering, and extends it to other configurations (order of normalization and activation) for different orders of layers, as well as using root mean normalization. Furthermore, our findings align with studies showing that batch normalization induces orthogonal representations in deep networks \citep{yang2019meanfield,daneshmand2021batch}.

\section{Discussion}
The power of deep neural networks fundamentally arises from their depth and nonlinearity. Despite substantial empirical evidence demonstrating the importance of deep, nonlinear models, a rigorous theoretical understanding of how they operate and learn remains a significant theoretical challenge. Much of classical mathematical and statistical machinery was developed for linear systems or shallow models \citep{hastie2009elements}, and extending these tools to deep, nonlinear architectures poses substantial difficulties. Our work takes a natural first step into this theoretical mystery by providing a rigorous analysis of feedforward networks with nonlinear activations. 

We showed that viewing activations in the Hermite basis uncovers several strikingly elegant properties of activations. Many architectural choices, such as normalization layers, initialization techniques, and CLT-type convergences, make Gaussian preactivations central to understanding the role of activations. Hermite expansion can be viewed as the Fourier analog for the Gaussian kernel. These facts and our theoretical results, suggest that viewing activations in the Hermite basis is more natural than in the raw signal, analogous to viewing convolutions in the Fourier basis. For example, the fact that the kernel map $\kappa$ is analytic and has a positive power series expansion, i.e., is infinitely smooth, does not require the activation $\phi$ to be smooth or even continuous. Thus, as opposed to many existing analyses that consider smooth and non-smooth cases separately, our theoretical framework gives a unified perspective. As an interesting example, smoothness, which has been observed to facilitate better training dynamics~3\citep{hayou2019impact}, appears as a faster decay in Hermite coefficients. Thus, similar to leveraging the Fourier transform for understanding and designing filters, one can aspire to use Hermite analysis for designing and analyzing activation functions.

One of the key contributions of our work is the global perspective in analyzing neural networks. Traditional analyses often focus on local structure, such as Jacobian eigenspectrum by~\citet{pennington2018emergence}. While these local studies provide valuable insights, they do not capture the global kernel dynamics. More concretely, real-world inputs to neural networks, such as images of a dog and a cat, are not close enough to each other to be captured in the local changes. Our global approach allows us to study the evolution of representations across layers for any pair of inputs, providing further insights into how deep networks transform inputs with varying degrees of similarity.

One of the central assumptions for our approach is operating in the mean-field, i.e., infinite-width regime. While this approach led us to establish numerous elegant properties, it would be worth exploring the differences between finite-width and infinite-width results. This remains an important avenue for future work. Similarly, in the CLT application of Proposition~\ref{prop:mean_field_kernel_general}, the exact discrepancies between the limited width and the infinite width would deepen our understanding of practical neural networks. Another intriguing endeavor is the potential to extend our framework to develop a theory for self-normalizing activation functions in a vein similar to~\citep{klambauer2017self}. Finally, building on the mathematical foundations laid here to study first- and second-order gradients and their impact on training dynamics remains a highly interesting topic for future research.



\bibliographystyle{plainnat}
\bibliography{refs}

\appendix
\renewcommand{\thetable}{S.\arabic{table}}
\renewcommand{\theremark}{S.\arabic{remark}}
\renewcommand{\thefigure}{S.\arabic{figure}}
\renewcommand{\thetheorem}{S.\arabic{theorem}}
\renewcommand{\thecorollary}{S.\arabic{corollary}}
\renewcommand{\thecorollary}{S.\arabic{lemma}}

\section{Supplementary theorems and proofs}\label{sec:proofs}
This section is dedicated to the proof of main theoretical results presented in the paper. 

Theorem~\ref{thm:global_attract_centered}, which is closely aligned with Theorem A2 of~\citet{joudaki2023impact}, and for historical reasons, it is kept here, while Theorem~\ref{thm:global_attract}, which is the general version, is citing Theorem~\ref{thm:global_attract_centered} as one specific case. Likewise, Lemma~\ref{lem:mehler_kernel}'s statement is a close replica of Lemma A8 in~\cite{joudaki2023impact}. Thus, in both cases, we have skipped the proofs here, and refer the interested reader to~\cite{joudaki2023impact} for the proofs.

\begin{proof}[Proof of Proposition~\ref{prop:mean_field_kernel_general}]
 The proof is a straightforward application of the law of large numbers, as the mean-field regime implies that the sample means converge to the population means. Let us inductively assume that at layer $\ell$, it holds $\frac1d\|h^{\ell}(x)\|^2 = 1$ and $\frac1d\|h^{\ell}(y)\|^2=1,$ and $\frac1d\langle h^{\ell}(x),h^{\ell}(y)\rangle = \rho_{\ell}$. Thus, defining $a = W^{\ell+1} h^\ell(x)$ and $b = W^{\ell+1} h^\ell(y),$ their elements $a_i$'s and $b_i$'s follow $N(0,1)$, and have covariance $\E a_i b_i = \rho_{\ell}.$ Thus, again by construction we have $\E \phi(a_i)^2 = \E \phi(b_i)^2 = 1$, and $\E \phi(a_i)\phi(b_i) = \rho_{\ell+1}.$ Finally, because $a_i$'s and $b_i$'s are~i.i.d. copies of their respective distributions, based on the law of large numbers, we can conclude the samples means will converge to their expectations:
 \begin{align*}
&\frac{1}{d} \|h^{\ell+1}(x)\|^2 = \frac{1}{d} \|\phi(a)\|^2 = \frac{1}{d} \sum_{i=1}^d \phi(a_i)^2 \xrightarrow{d \to \infty} \E \phi(a_1) = 1\\
&\frac1d \|h^{\ell+1}(y)\|^2 = \frac1d \|\phi(b)\|^2 = \frac1d \sum_{i=1}^d \phi(b_i)^2 \xrightarrow{d \to \infty}  \E \phi(b_1) =  1\\
&\frac1d \langle h^{\ell+1}(x),h^{\ell+1}(y)\rangle = \frac1d \langle \phi(a),\phi(b)\rangle = \frac1d \sum_{i=1}^d \phi(a_i)\phi(b_i) \xrightarrow{d \to \infty}  \E \phi(a_1)\phi(b_1) = \rho_{\ell+1}
 \end{align*}
Thus, we have proved the induction hypothesis for step $\ell+1$. This concludes the proof. 
\end{proof}

\begin{theorem}\label{thm:global_attract_centered}
Let $\phi$ be a nonlinear activation with kernel map $\kappa$ that is centered $\kappa(0)=0,$ and $\kappa(1)=1$. 
Let $\rho_\ell$ be the kernel sequence generated by $\kappa$, given the initial value $\rho_0 \in(-1 , 1)$.
Then, the sequence contracts towards the fixed point at zero $\rho^*=0$ with rate $\alpha$:
    \begin{align}
        &\frac{|\rho_\ell|}{1-|\rho_\ell|} \le \frac{|\rho_0|}{1-|\rho_0|}\alpha^{\ell}, && \alpha := \frac{1}{2-\kappa'(0)},
    \end{align}
 where $\alpha<1.$ The only other fixed points can be $\rho^*=\pm 1$, and none of them is locally or globally attractive.
\end{theorem}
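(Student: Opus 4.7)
\textbf{Proof plan for Theorem~\ref{thm:global_attract_centered}.}

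The plan is to reduce the theorem to a single one-step contraction inequality expressed in the change of variables $f(x) = x/(1-x)$, then iterate. I would first invoke Corollary~\ref{cor:hermite_covariance} to write $\kappa(\rho) = \sum_{k \ge 0} c_k^2 \rho^k$, noting that the centering hypothesis $\kappa(0)=0$ gives $c_0 = 0$ and that $\kappa(1)=1$ gives the normalization $\sum_{k \ge 1} c_k^2 = 1$. Nonlinearity forces at least one $c_k \neq 0$ with $k \ge 2$, so $c_1^2 < 1$ and hence $\alpha = 1/(2-c_1^2) < 1$. I would also record $\kappa'(0) = c_1^2$.

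The core step is to show the one-step contraction
\begin{align*}
\frac{|\kappa(\rho)|}{1-|\kappa(\rho)|} \le \alpha \cdot \frac{|\rho|}{1-|\rho|}, \qquad \rho \in (-1,1).
\end{align*}
My approach is first to dominate $|\kappa(\rho)|$ by a quadratic in $|\rho|$ that is uniform in the sign of $\rho$. Because $|\rho|^k \le |\rho|^2$ for $k \ge 2$ and $|\rho| \le 1$, the triangle inequality gives
\begin{align*}
|\kappa(\rho)| \le c_1^2 |\rho| + \sum_{k \ge 2} c_k^2 |\rho|^k \le c_1^2 |\rho| + (1-c_1^2) |\rho|^2.
\end{align*}
Setting $a = c_1^2$, $b = 1-a$, $\sigma = |\rho|$, the right-hand side factors nicely: $1 - a\sigma - b\sigma^2 = (1-\sigma)(1 + b\sigma)$, so
\begin{align*}
f\bigl(a\sigma + b\sigma^2\bigr) = \frac{\sigma(a + b\sigma)}{(1-\sigma)(1+b\sigma)} = f(\sigma) \cdot \frac{a+b\sigma}{1+b\sigma}.
\end{align*}
The inequality $(a+b\sigma)/(1+b\sigma) \le 1/(2-a) = \alpha$ then reduces, after clearing denominators and using $a+b=1$, to $b^2 \sigma \le b^2$, which holds for $\sigma \le 1$ with equality only at $\sigma=1$. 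Because $f$ is monotone on $[0,1)$, this chain yields the desired one-step bound.

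Iterating the inequality $\ell$ times gives $f(|\rho_\ell|) \le \alpha^\ell f(|\rho_0|)$, which is exactly the stated rate. For the fixed-point claim, any $\rho^* \in (-1,1)$ satisfying $\kappa(\rho^*) = \rho^*$ would survive iteration, contradicting $f(|\rho_\ell|) \to 0$ unless $\rho^* = 0$; and the endpoints $\pm 1$ are unstable because $\kappa'(1) = \sum_{k \ge 1} k c_k^2 > \sum_{k \ge 1} c_k^2 = 1$ (strict by nonlinearity), and analogously at $-1$ when it is a fixed point (forcing $\phi$ to be odd, so $\kappa'(-1) = \kappa'(1) > 1$). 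The main obstacle, in my view, is handling $\rho < 0$ cleanly: a naive sign-by-sign analysis suggests the worst case is concentrating mass on $c_3$ (yielding a cubic bound), which is sharper but harder to manipulate. The route above sidesteps this by using the uniform quadratic majorant $a\sigma + b\sigma^2$, which is looser for $\rho<0$ but still yields the same contraction factor $\alpha$ and a single clean calculation for both signs.
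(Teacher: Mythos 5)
Your proof is correct, and it takes a genuinely different route from the one the paper follows. The paper defers the formal proof of this theorem to Theorem A2 of \citet{joudaki2023impact}, but its own discussion (the kernel-ODE subsection for centered activations) makes the intended strategy clear: split by the sign of $\rho$, identify the worst-case coefficient pattern separately in each regime (mass on $k=2$ for $\rho\ge 0$, mass on $k=3$ for $\rho<0$, since only odd terms slow convergence there), obtain two different potentials $|\rho|/(1-\rho)$ and $|\rho|/\sqrt{1-\rho^2}$, and then merge them via $\sqrt{1-x^2}\ge 1-|x|$. You instead apply the triangle inequality once to get the sign-uniform quadratic majorant $|\kappa(\rho)|\le c_1^2|\rho|+(1-c_1^2)|\rho|^2$, exploit the factorization $1-a\sigma-b\sigma^2=(1-\sigma)(1+b\sigma)$, and reduce the one-step contraction in the potential $x/(1-x)$ to the trivial inequality $b^2\sigma\le b^2$. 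I verified the algebra: the factorization is right, $(a+b\sigma)/(1+b\sigma)$ is increasing in $\sigma$ and attains its maximum $1/(2-a)=\alpha$ at $\sigma=1$, and the monotonicity of $f$ on $[0,1)$ together with $a\sigma+b\sigma^2<1$ for $\sigma<1$ justifies the chain. Your treatment of the fixed points ($\kappa'(1)=\sum_k k c_k^2>1$ strictly by nonlinearity; $-1$ a fixed point only for odd $\phi$, in which case $\kappa'(-1)=\kappa'(1)$) also matches the paper's claims. What each approach buys: the paper's sign-split argument is tighter in the negative range (the cubic majorant there reflects the true worst case and underlies the ODE intuition the paper wants to convey), whereas your uniform quadratic majorant is looser for $\rho<0$ but yields the identical final constant $\alpha=1/(2-\kappa'(0))$ in a single, self-contained calculation with no case merge. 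Given that the stated theorem only asks for this $\alpha$, your argument is a clean and arguably preferable standalone proof.
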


\begin{proof}[Proof of Theorem~\ref{thm:global_attract}]
We will proof all cases individually, starting with the first case that falls directly under a previous theorem. 

\subsection*{Case 1: $\kappa(0)=0$}
We can observe that the case where $\kappa(0)=0,$ falls directly under Theorem~\ref{thm:global_attract}, and thus there is no need to prove it again.

\subsection*{Cases 2,3: $\kappa(0)>0$ and $\kappa'(1)\le 1$} In this part, we jointly consider two cases where $\kappa(0)>0$ and $\kappa'(1) < 1,$ and $\kappa'(1)=1.$ Let us consider the ratio between distances $|\rho_\ell-1|$:
\begin{align*}
    \frac{|\kappa(\rho)-1|}{|\rho-1|} &= \frac{1-\kappa(\rho)}{1-\rho} \\
    &=\frac{\kappa(1)-\kappa(\rho)}{1-\rho}\\
    &=\frac{\sum_{k=1}^\infty c_k^2 (1-\rho^k)}{1-\rho}\\
    &=\sum_{k=1}^\infty c_k^2 \sum_{i=0}^{k-1}\rho^i\\
\implies \frac{|\kappa(\rho)-1|}{|\rho-1|} &=\kappa'(1)-\kappa'(1)+\sum_{k=1}^\infty c_k^2 \sum_{i=0}^{k-1} \rho^i\\
&= \kappa'(1)-\sum_{k=1}^\infty c_k^2 \big(k-\sum_{i=0}^{k-1} \rho^i\big)\\
\end{align*}
Clearly, the term $k-\sum_{i=0}^{k-1} \rho^i$ is is always non-negative. Thus, if $\kappa'(1)<1,$ then we have the contraction 
\begin{align*}
    \frac{|\kappa(\rho)-1|}{|\rho-1|} \le \kappa'(1) \implies |\rho_\ell-1| \le |\rho_0-1| \kappa'(1)^\ell.
\end{align*}
Otherwise, if $\kappa'(1)=1,$ we have 
\begin{align*}
    \frac{|\kappa(\rho)-1|}{|\rho-1|} = 1-\sum_{k=1}^\infty c_k^2 \big(k-\sum_{i=0}^{k-1} \rho^i\big).
\end{align*}
Now, observe that the first term for $k=1$ is zero. Furthermore, the sequence $k-\sum_{i=0}^{k-1}\rho^i$ is monotonically increasing in $k$. Thus, the smallest value the weighted sum can achieve is if all of the weights of terms above $k\ge 2$ are concentrated in $k=2,$ which leads to the contraction
\begin{align*}
    \frac{|\kappa(\rho)-1|}{|\rho-1|} \le 1-(1-c_0^2-c_1^2)(2-1-\rho) \\
    = 1- (1-c_0^2-c_1^2) (1-\rho).
\end{align*}
Now, define sequence $x_\ell:= 1-\rho_\ell,$ and observe that we have 
\begin{align*}
    x_{\ell+1} \le x_\ell (1-\alpha x_\ell), && \alpha = 1-c_0^2-c_1^2,
\end{align*}
where $\alpha > 0$ if the activation is nonlinear. 
We can prove inductively that 
\begin{align*}
x_\ell\le \frac{x_0}{\ell\alpha x_0 + 1}.
\end{align*}
If we plug in the definition of $x_n$ we have proven
\begin{align*}
|\rho_\ell-1| \le \frac{|\rho_0-1|}{\ell \alpha |\rho_0-1| + 1}.
\end{align*}

\subsection*{Case 4: $\kappa(0)>0$ and $\kappa'(1)>1$}
The main strategy is to prove some contraction of $\kappa(\rho)$ towards $\rho^*$, under the kernel map $\kappa$. In other words, we need to show $|\kappa(\rho)-\rho^*|$ is smaller than $|\rho-\rho^*|$ under some potential. First, we assume there is a $\rho^*$ such that $\kappa'(\rho^*)<1,$ and show this contraction, and later prove its existence and uniqeness. 

To prove contraction towards $\rho^*$ when $\kappa'(\rho^*)<1$, we consider three cases: 1) If $\rho > \rho^*$, 2) If $\rho \in [0,\rho^*]$, and 3) If $\rho < 0$. However, the bounds will be of different potential forms and will have to be combined later. Let $\kappa(\rho) = \sum_{k=0}^\infty c_k^2 \rho^k$ be the kernel map with $\kappa(1)= 1$ with fixed point $\rho^*$ that satisfies $\kappa'(\rho^*)<1.$

\begin{itemize}
\item \textbf{$\rho\ge \rho^*$.} we will prove:
\begin{align*}
\frac{|\kappa(\rho)-\rho^*|}{1-\kappa(\rho)} \le \frac{|\rho-\rho^*|}{1-\rho} \kappa'(\rho^*)
\end{align*}

We have the series expansion around $\rho^*$: $\kappa(\rho) = \rho^* + \sum_{k=1}^\infty a_k (\rho-\rho^*)^k$. For points $\rho\ge \rho^*$, we will have $\kappa(\rho)\ge \rho^*$, thus we can write

\begin{align*}
&\frac{\kappa(\rho)-\rho^*}{1-\kappa(\rho)} \\
&= \frac{\sum_{k=1}^\infty a_k (\rho-\rho^*)^k}{\kappa(1)- \kappa(\rho^*)}\\
&= \frac{\sum_{k=1}^\infty a_k (\rho-\rho^*)^k}{\sum_{k=1}^\infty a_k (1-\rho^*)^k - \sum_{k=1}^\infty a_k (\rho-\rho^*)^k}  \\
&= \frac{\rho-\rho^*}{1-\rho}\cdot\frac{\sum_{k=1}^\infty a_k (\rho-\rho^*)^{k-1}}{\sum_{k=1}^\infty a_k (\sum_{i=0}^{k-1}(1-\rho^*)^i(\rho-\rho^*)^{k-1-i})}\\
&\le \frac{\rho-\rho^*}{1-\rho}\frac{\sum_{k=1}^\infty a_k (\rho-\rho^*)^{k-1}}{\sum_{k=2}^\infty a_k (1-\rho^*)^{k-1} + \sum_{k=1}^\infty a_k (\rho-\rho^*)^{k-1}},
\end{align*}
where in the last line the inequality is due to the fact that we are only retaining the terms corresponding to $i=0$ and $i=k-1$ in the denominator.
Now, note the right hand side maximizes when $\rho\in[\rho^*,1]$ is maximized, which is obtained when $\rho=1$:
\begin{align*}
&\frac{\kappa(\rho)-\rho^*}{1-\kappa(\rho)}\\
&\le \frac{\rho-\rho^*}{1-\rho}\frac{\sum_{k=1}^\infty a_k (1-\rho^*)^{k-1}}{\sum_{k=2}^\infty a_k (1-\rho^*)^{k-1} + \sum_{k=1}^\infty a_k (1-\rho^*)^{k-1}}.
\end{align*}
Now we can observe that the numerator and denominator correspond to 
\begin{align*}
    &=\frac{\rho-\rho^*}{1-\rho}\frac{\kappa(1)-\rho^*}{2\kappa(1)-\kappa'(\rho^*)}\\
&=\frac{\rho-\rho^*}{1-\rho}\frac{1-\rho^*}{2-\kappa'(\rho^*)}
\end{align*}
Thus, we have proven that 
\begin{align*}
\rho \ge \rho^* \implies \frac{|\kappa(\rho)-\rho^*|}{1-\kappa(\rho)} \le \frac{|\rho-\rho^*|}{1-\rho} \frac{1-\rho^*}{2-\kappa'(\rho^*)}
\end{align*}


\item \textbf{$0\le \rho \le \rho^*$.}
Consider $\rho \in [0,\rho^*]$. For these $\kappa'(\rho)$ is always monotonically increasing, implying that $\kappa'(\rho)\le \kappa'(\rho^*) < 1$. Thus, $|\kappa(\rho)-\rho^*| \le \kappa'(\rho^*) |\rho - \rho^*|$. Thus, by Banach fixed point theorem, we have that $\kappa$ is a contraction in this range with a rate $\kappa'(\rho^*)$:
\begin{align*}
0\le \rho \le \rho^* \implies |\kappa(\rho) - \rho^*| \le \kappa'(\rho^*) |\rho-\rho^*|
\end{align*}

\item \textbf{$-1 \le \rho \le 0$.}
Finally, let us consider $\rho \le 0$. 
Recall that we have $\kappa(1) = 1$. Thus, we can express $\kappa(\rho)-1$ as product of $(\rho-1)$ with some power series $q(\rho)$:
\begin{align*}
\kappa(\rho) - 1 = (\rho-1)q(\rho), \quad q(\rho) = \sum_{k=0}^\infty b_k \rho^k.
\end{align*}
In fact, we can expand $\kappa(\rho)$ in terms of these new coefficients
\begin{align*}
\kappa(\rho) = 1-b_0 + \sum_{k=0}^\infty (b_k - b_{k+1}) \rho^k
\end{align*}
Due to the non-negativity of coefficients of $\kappa$, we can conclude $1\ge b_0 \ge b_1 \ge ...$. Based on this observation, for $0 < \rho < 1$, we can conclude
$q(-\rho) = b_0 - b_1 \rho + b_2 \rho^2 - ... \le b_0$. Because we can pair each odd and even term $-b_{k}\rho^k + b_{k+1} \rho^{k+1}$ for all odd $k$, and because coefficients $b_k \ge b_{k+1}$ and $\rho^{k} \ge \rho^{k+1}$ for $\rho \in [0,1]$, we can argue $q(-\rho) \le b_0 = 1 - c_0^2$. Now, plugging this value into the kernel map for $0 < \rho < 1$, we have:
\begin{align*}
\kappa(-\rho) &= 1 - (1+\rho)q(-\rho) \\
&\ge 1-(1+\rho)(1-c_0^2)\\
&= 1 - 1 - \rho + c_0^2 (1+\rho)\\
&\implies \kappa(-\rho) + \rho \ge c_0^2 (1+\rho)\\
&\implies -\rho + c_0^2(1+\rho) \le \kappa(-\rho) \le \kappa(\rho)
\end{align*}
Now, if we assume $\kappa(-\rho) \le \rho^*$ then
\begin{align*}
\frac{|\kappa(-\rho)-\rho^*|}{|-\rho-\rho^*|}&=\frac{\rho^* - \kappa(-\rho)}{\rho^* + \rho} \\
&\le \frac{\rho^* + \rho - c_0^2 (1+\rho)}{\rho+\rho^*} \\
&= 1- \frac{c_0^2(1+\rho)}{\rho+\rho^*} \\
&\le 1-c_0^2\\
&= 1 - \kappa(0).
\end{align*}
Now, if we assume $\kappa(-\rho) \ge \rho^*$, knowing that $\kappa(-\rho)\le \kappa(\rho)$, which necessitates $\rho\ge \rho^*$, which implies $\kappa(\rho)\le \rho$. Thus, we have
\begin{align*}
\frac{|\kappa(-\rho)-\rho^*|}{|-\rho-\rho^*|} &= \frac{\kappa(-\rho)-\rho^*}{\rho+\rho^*}\\
&\le \frac{\kappa(-\rho)-\rho^*}{\rho+\rho^*} \\
&\le \frac{\rho-\rho^*}{\rho+\rho^*} \\
&\le \frac{1-\rho^*}{1+\rho^*} \\
&\le 1-\rho^*
\end{align*} 
Combining both cases we have 
\begin{align*}
\rho \le 0 \implies \frac{|\kappa(\rho)-\rho^*|}{|\rho-\rho^*|} &\le 1-\min(\kappa(0),\rho^*).
\end{align*}
We can further prove that $\rho^* = \kappa(\rho^*) = k(0) + \text{non-negative terms}$, which implies that $\rho^* \ge k(0)$. Thus, we can conclude that 
\begin{align*}
\rho \le 0 \implies |\kappa(\rho)-\rho^*| \le (1 - k(0))|\rho-\rho^*|
\end{align*}
\end{itemize}

\textbf{Combining the cases}
Let us summarize the results so far. We have proven the existence of a unique fixed point $\rho^*\in[0,1]$ such that $\kappa'(\rho^*)< 1,$ and we have proven contraction rates for each of the three cases.
\begin{align*}
\begin{cases}
\frac{|\kappa(\rho)-\rho^*|}{1-\kappa(\rho)} \le \frac{|\rho-\rho^*|}{1-\rho}\frac{1-\rho^*}{2-\kappa'(\rho^*)} & \text{if } \rho \ge \rho^* \\
|\kappa(\rho)-\rho^*| \le \kappa'(\rho^*)|\rho-\rho^*| & \text{if } 0 \le \rho \le \rho^*\\
|\kappa(\rho)-\rho^*| \le (1 - k(0))|\rho-\rho^*| & \text{if } \rho \le 0 
\end{cases}
\end{align*}


Let us now define the joint decay rate:
\begin{align*}
\alpha = \max\left\{1 - k(0), \kappa'(\rho^*), \frac{1-\rho^*}{1-\kappa'(\rho^*)}\right\}
\end{align*}
In other words, this is the worst-case rate for any of the above cases. 

Now, let us assume we are starting from initial $\rho_0$ and define $\rho_\ell = \kappa(\rho_{\ell-1})$. One important observation is that if we have $\rho_0 \ge \rho^*$ then by monotonicity of $\kappa$ in the $[0,1]$ range, it will remain the same range, and similarly if $\rho_0\in[0,\rho^*]$ it will remain in the same range. Thus, from that index onwards, we can apply the contraction rate of the respective case. The only case that there might be a transition is if $\rho_0 < 0$. 

Assuming that $\rho_0 < 0$, let $\rho_\ell$ be the first index that we have $\rho_\ell \ge 0$. Thus, from $\rho_0$ to $\rho_\ell$ we can apply the contraction rate of the third case:
\begin{align*}
|\rho_\ell - \rho^*| \le |\rho_0 - \rho^*| \alpha^\ell
\end{align*}

Now, we have two possibilities, either $\rho_\ell \ge \rho^*$ or $\rho_\ell \le \rho^*$. If $\rho_\ell \ge \rho^*$, we can apply the contraction rate of the first case, and if $\rho_\ell \le \rho^*$ we can apply the contraction rate of the second case:
\begin{align*}
    \begin{cases}
|\rho_L - \rho^*| \le |\rho_\ell - \rho^*| \alpha^{L-\ell} & 0\le \rho_\ell\le \rho^*\\
\frac{|\rho_L-\rho^*|}{1-\rho_L} \le \frac{|\rho_\ell-\rho^*|}{1-\rho_\ell}\alpha^{L-\ell} & \rho_\ell\ge \rho^* \\
    \end{cases}
\end{align*}
If we plug in our contraction up to step $\ell$ and use the fact that the norm of the sequence is non-increasing $|\rho_0|\ge \rho_\ell,$ we have
\begin{align*}
    \begin{cases}
|\rho_L - \rho^*| \le |\rho_0 - \rho^*| \alpha^{L} & 0\le \rho_\ell\le \rho^*\\
\frac{|\rho_L-\rho^*|}{1-\rho_L} \le \frac{|\rho_0-\rho^*|}{1-|\rho_0|}\alpha^{L} & \rho_\ell\ge \rho^* \\
    \end{cases}
\end{align*}
We can now take the worst case of these two and conclude that
\begin{align*}
|\rho_L - \rho^*| \le \frac{|\rho_0 - \rho^*|}{1-|\rho_0|} \alpha^L
\end{align*}

So far in the proof, we assumed the existence of $\rho^*$ that obeys $\kappa'(\rho^*)<1.$ We can now prove that such a fixed point exists. It is unique, and it is necessarily in the range $\rho^*\in [0,1].$

\textbf{Positivity, uniqueness, and existence of a globally attractive fixed point}
Here, the goal is to prove there is exactly one point $\rho^*\in[0,1]$ such that $\kappa(\rho^*) = \rho^*$ and $\kappa'(\rho^*) < 1.$ We will prove the properties of positivity, uniqueness, and existence separately.

\textit{Positivity:} Let us assume that $\rho^* \le 0.$ is a fixed point. Then, we can apply the contraction rate proven for Case 3, which shows that $\kappa(\rho^*)\ge \rho^* + k(0) > \rho^*,$ which is a contradiction.

\textit{Uniqueness:} Assume that there are two fixed points $\rho_1$ and $\rho_2$ that satisfy $\kappa'(\rho_1),\kappa'(\rho_2)<1.$ Let us assume wlog that $\rho_1 < \rho_2.$ Then we can invoke the contraction rate proven so far to argue that all points in $(-1,1),$ including  $\rho\in (\rho_1,\rho_2)$ are attracted towards both $\rho_1$ and $\rho_2,$ which is a contradiction. Thus, there can be at most one fixed point.

\textit{Existence of $\rho^*$:} Because $\kappa(1)=1$ the set of all fixed points is non-empty. Let us assume that $\rho^*$ is the first (smallest) fixed point of $\kappa(\rho^*) = \rho^*,$ which because of the positivity result is necessarily $\rho^*>0.$ If we assume that $\kappa'(\rho^*) > 1,$ then in the small $\epsilon$-neighborhood of it $\rho_1 \in(\rho^*-\epsilon,\rho^*)$ we have $\kappa(\rho_1) < \rho_1.$ Because $\kappa(\rho)$ is continuous, and is above identity line at $\rho=0$ and under identity line $\rho=\rho_1,$ there must be a point $0 < \rho_2 < \rho_1$ where it is at identity $\kappa(\rho_2) = \rho_2,$ which is a contradiction with assumption that $\rho^*$ is the smallest fixed point. Thus, we must have $\kappa'(\rho^*) \le 1.$ If we assume that $\kappa'(\rho^*) = 1,$ then the $\kappa$ must align with the identity line from $\rho^*$ to $1,$ which implies that all higher order terms $c_k,k\ge 2$ must be zero, which in turn implies that $\kappa$ is a linear function. This is a contradiction with the assumption that the activation is nonlinear. Thus, we must have $\kappa'(\rho^*) < 1,$ which proves the desired existence. 
\end{proof}

\begin{proof}[Proof of Proposition~\ref{prop:residual_kernel_map}]
First, we ought to prove that in the mean-field regime, the kernel map is transformed according to the equation that is given. To do so, we can consider one layer update. Assume that $X,Y\sim N(0,1)$ with covariance $\E XY = \rho.$ Now, $(X',Y')$ is an independent copy of $(X,Y),$ with the same variance and covariance structure. This is due to the presence of skip connection weights $P.$ Now, defined the joint layer update 
\begin{align*}
\psi(X) &= \sqrt{1-r^2} \phi(X)  + r X',\\
\psi(Y) &= \sqrt{1-r^2} \phi(Y')  + r Y',\\
\implies \kappa_\psi(\rho)&:= \E \psi(X)\psi(Y) \\
&=  (1-r^2) \E \phi(X)\phi(Y) + r^2 \E X' Y' \\
    &= (1-r^2) \kappa(\rho) + r^2 \rho,
\end{align*}
where in the third line we use the independence assumption and the fact that $X'$ and $Y'$ have zero mean. 

Observe that $\kappa_\psi(\rho) = (1-r^2) \kappa(\rho),$ or $\kappa_\psi'(\rho) = (1-r^2) \kappa'(\rho) + r^2.$ Now, let us consider the conditions for the four cases of Theorem~\ref{thm:global_attract}. The decision boundaries are if $\kappa(0)$ is positive or not, if $\kappa'(1)$ is above, equal to, or below $1.$ Now, note that for $r \in (0,1),$ strict positivity of $\kappa_\psi(0) = (1-r^2) \kappa(0)$ remains the same as $\kappa(0).$ Furthermore, $\kappa_\psi'(1)$ is a weighted average of $\kappa'(1)$ and $1.$ Thus, for all values $r\in(0,1),$ it holds $\kappa_\psi'(1)$ is above, equal to, or below $1,$ if and only if $\kappa'(1)$ has the corresponding property. Now, we can turn our focus on the convergence rates. For this, we can focus on $\alpha$ in each one of the four cases. 
\begin{itemize}
    \item If $\kappa(0)=0$ then $\kappa_\psi(0) = r^2 \kappa(0)$ and we have $\alpha = 1 / (2-\kappa_\psi'(0)).$ Now, note that $\kappa_\psi'(0) = (1-r^2)\kappa'(0) +  r^2$ is a weighted average between $\kappa'(0) < 1$ and $1,$ and thus, the larger the value of $r,$ the larger $\alpha$ would be, and the slower the convergence. 
    \item If $\kappa(0)>0$ and $\kappa'(1)<1$ then we have $\kappa_\psi(0)=0$ and $\kappa_\psi'(1) < 1,$ and $\alpha = \kappa_\psi'(1) = (1-r^2) \kappa'(1) + r^2.$ Thus, the larger the residual, the closer $\alpha$ becomes to $1,$ and the slower the convergence will be.  
    \item If $\kappa(0)>0$ and $\kappa'(1)=1,$ then we have the same for $\kappa_\psi,$ and 
    \begin{align*}
        \alpha &= 1 - \kappa_\psi(0) - \kappa_\psi'(0)\\
        &= 1 - (1-r^2) \kappa(0) - (1-r^2)\kappa'(0) - r^2\\
        &= (1-r^2 )( 1 - \kappa(0) - \kappa'(0))
    \end{align*}
    Now, recall that we have $\kappa(0) = c_0^2$ and $\kappa'(0)=c_1^2,$ and thus we have $1-\kappa(0) - \kappa'(0) = \sum_{k=2}^\infty,$ which is necessarily positive for a nonlinear activation:
    \begin{align*}
        \alpha = (1-r^2) \sum_{k=2}^\infty c_k^2.
    \end{align*}
    We can now see that for larger $r,$ $\alpha $ will be smaller, which in this case implies a slower convergence. 
    \item If $\kappa(0)>0$ and $\kappa'(1) > 1$ then same holds for $\kappa_\psi,$ and the convergence rate $\alpha$ .
\end{itemize}

\end{proof}

\begin{proof}[Proof of Proposition~\ref{prop:normalization_kernel_map}]
Recall the assumption that $\phi$ obeys $\E \phi(X)^2 = 1,\, X\sim N(0,1). $ Let us assume that $Z\sim N(0,I_d)$ represents the Gaussian pre-activations from the previous layer We can consider the joint normalization and activation layer $\psi$ in the mean-field regime: 
\begin{itemize}
    \item $\psi = \phi\circ RN$ and $\psi = \phi\circ LN$: Because each element of $Z$ has zero mean and unit variance, in the mean field regime, the sample mean and variances will be equal to the their population counterparts, implying that $LN(Z) = Z$ and $RN(Z) = Z.$ In other words, they act as identity. Thus, in both cases, in the mean-field regime we have $\psi = \phi.$ Thus, the kernel map also remains the same. 
    \item $\psi = RN \circ \phi$: In this case, because of the assumption on activation for Gaussian preactivations we have $\E \phi(Z_i)^2 = 1,$ for all $i=1,\dots, d.$ Because elements of $Z$ are i.i.d., by law of large numbers $\frac1d\|\phi(Z)\|^2$ will converge to its expected value $1.$ Thus, again, the normalization step in RN becomes ineffective, implying that in the mean-field we have $\psi = \phi.$ Thus, the kernel map also remains the same. 
    \item $\psi = LN \circ \phi$: By definition of kernel map of activation, we have  
    \begin{align*}
        \E \phi(Z_i) &= \sqrt{\kappa_\phi(0)}, 
       && \mathrm{Var}(Z_i) = \E \phi(Z_i)^2 - (\E \phi(Z_i))^2 =  1- \kappa_\phi(0)
    \end{align*}
    Thus, again by law of large numbers we will have 
    $$\psi = (\phi - \sqrt{\kappa_\phi(0)})/\sqrt{1 - \kappa_\phi(0)}.$$ Now, if we look at the kernel map of this affine transformation, we have 
    \begin{align*}
    \kappa_\psi(\rho) &= \E \psi(X)\psi(Y)\\
    &= \E \frac{(\phi(X)-\sqrt{\kappa_\phi(0)} )}{\sqrt{\kappa(1)-\kappa(0)}}\frac{(\phi(Y)-\sqrt{\kappa_\phi(0)} )}{\sqrt{1-\kappa_\phi(0)}}\\
    &= \frac{\E \phi(X)\phi(Y) - \kappa_\phi(0)}{1-\kappa_\phi(0)}\\
    &= \frac{\kappa_\phi(\rho)-\kappa_\phi(0)}{1-\kappa_\phi(0)}
    \end{align*}
    which concludes the proof. 
\end{itemize}
\end{proof}

\begin{remark}\label{rem:residual_ODE}
    Another interesting implication is that if we take a look at discrete and continuous kernel dynamics for a very strong residual (when $r\to 0$), the kernel ODE becomes exact. Let us denote the Hermite coefficients of the residual kernel by $\tilde{c}_k.$ For the first term, we have $\tilde{c}_1^2 = r^2 c_1^2 + (1-r^2) 1$ and for all other terms $\tilde{c}_k^2 = r^2 c_k$. Note that as $r \to 0,$ we have $\tilde{c}_1^2\to 1,$ and for all other terms $\tilde{c}_k^2\to 0.$ Thus, in the discrete kernel dynamics, the right-hand side will converge to zero. This implies that in the limit of very strong residuals, the kernel ODE gives the exact solution to the kernel dynamics. 
\end{remark}

\begin{remark}\label{rem:no_negative_geometric}
    Suppose there is an activation $\phi$ with a negative fixed point $\rho^*<0.$ Let $n$ be an integer such that $\rho^* < -1/(n-1).$ Let $x_1,\dots, x_n\in \R^d$ be vectors that have non-zero inner products. Construct an MLP with activation $\phi$ and let $y_1,\dots, y_n$ be the output of this MLP. Thus, if the depth is sufficiently large, the pairwise similarity between each pair will converge to $\rho^*.$ Thus, their output Gram matrix $G = [\langle y_i,y_j\rangle]_{i,j\le n}$  has unit diagonals and off diagonals equal to $\rho^*.$ By analyzing the eigenvalues of $G$, we find: The top eigenvalue corresponding to the all-ones eigenvector is $\lambda_1 = 1 + (n-1)\rho^*$. Because we assumed $\rho^* < -1/(n-1),$, we have $1 + (n-1)\rho^* < 0$, which is a contradiction. Because $G$ by construction must be positive and semi-definite.  
\end{remark}

\begin{corollary}\label{cor:double_exp}
    If Hermite coefficients of $\phi$ has Hermite expansion $\phi = \sum_{k=m}^\infty c_k \he_k,$ then kernel sequence of MLP with activation $\phi$ converges to zero with double exponential rate 
    \begin{align*}
        |\rho_\ell| \le |\rho_0|^{m^\ell}.
    \end{align*}
\end{corollary}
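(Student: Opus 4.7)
The plan is to leverage the explicit Hermite form of the kernel map from Corollary~\ref{cor:kernel_map} and exploit the vanishing of the first $m{-}1$ coefficients to bootstrap a single-step bound of the form $|\kappa(\rho)| \le |\rho|^m$, which then iterates cleanly into the double-exponential rate.

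First, I would observe that under the standing normalization $\E\phi(X)^2=1$ and the assumption that $c_0=\dots=c_{m-1}=0$, Corollary~\ref{cor:kernel_map} gives
\begin{align*}
\kappa(\rho) = \sum_{k=m}^\infty c_k^2 \rho^k = \rho^m \sum_{k=m}^\infty c_k^2 \rho^{k-m}, && \sum_{k=m}^\infty c_k^2 = 1.
\end{align*}
Since the kernel sequence lies in $[-1,1]$ (the mean-field kernel map sends $[-1,1]$ to itself, as noted just after Proposition~\ref{prop:mean_field_kernel_general}), for any $\rho\in[-1,1]$ we have $|\rho|^{k-m}\le 1$ for all $k\ge m$, and since $c_k^2\ge 0$, taking absolute values yields
\begin{align*}
|\kappa(\rho)| \le |\rho|^m \sum_{k=m}^\infty c_k^2\, |\rho|^{k-m} \le |\rho|^m \sum_{k=m}^\infty c_k^2 = |\rho|^m.
\end{align*}
This is the key per-step contraction, and it is really the only nontrivial step; the rest is bookkeeping.

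Finally, I would close the argument by induction on $\ell$. The base case $\ell=0$ is trivial, and assuming $|\rho_\ell|\le |\rho_0|^{m^\ell}$, the single-step bound gives
\begin{align*}
|\rho_{\ell+1}| = |\kappa(\rho_\ell)| \le |\rho_\ell|^m \le \left(|\rho_0|^{m^\ell}\right)^m = |\rho_0|^{m^{\ell+1}},
\end{align*}
completing the induction. I do not anticipate a real obstacle: the entire corollary rides on the observation that a vanishing of the first $m{-}1$ Hermite coefficients pushes the lowest surviving power in $\kappa(\rho)$ up to $\rho^m$, and the nonnegativity of the squared coefficients together with $\sum c_k^2=1$ turns this into a pointwise bound on $[-1,1]$ without any of the case analysis needed in Theorem~\ref{thm:global_attract}.
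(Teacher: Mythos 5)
Your proof is correct and follows essentially the same route as the paper's own (much terser) argument: both reduce to the single-step bound $|\kappa(\rho)|\le|\rho|^m$ obtained from the vanishing of the first $m$ Hermite coefficients together with $\sum_k c_k^2=1$, followed by induction on $\ell$. Your version just spells out the factorization and the role of the normalization $\E\phi(X)^2=1$ more explicitly than the paper does.
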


The proof of corollary is by simply observing that, the slowest convergence happens when the weight of coefficients is concentrated at $k=m$ term, and for that case, we can observe that $\|\kappa(\rho)\|\le \|\rho\|^m,$ and by induction over $\ell$ we can prove the claim. 

\section{Validation of the global convergence theorem}\label{sec:experiments}
Here we will provide some numerical validation of the global convergence theorem. 

\begin{remark}\label{rem:act-normalization}
    First off, note that Theorem~\ref{thm:global_attract} requires that the activation functions preserve (do not increase or decrease) the energy of the pre-activations, $\E[\phi(X)^2] = 1,$ for $X\sim N(0,1).$ While some activation functions, namely \texttt{SeLU}~\cite{klambauer2017self}, have this property, we can achieve it for all activation functions by inversely scaling them by a constant factor, equal to $C = \sqrt{\E[\phi(X)^2]}.$ After this step, we can quantify various values relevant to Theorem~\ref{thm:global_attract}. This step is applied for both the results in the table, as well as Figures~\ref{fig:validation_plots} and \ref{fig:validation_plots2}. The scaling constant $C$ for each activation function is shown in Table~\ref{tab:activation_stats}.
\end{remark}

Recall the conditions for each convergence from Theorem~\ref{thm:global_attract}:
\begin{itemize}
    \item $\kappa(0)=0$: Case 1, exponential convergence towards $\rho^*=0.$
    \item $\kappa(0)>0$:
    \begin{itemize}
        \item $\kappa'(1)<1$: Case 2, exponential convergence towards $\rho^*=1.$
        \item $\kappa'(1)=1$: Case 3, polynomial convergence towards $\rho^*=1.$
        \item $\kappa'(1)>1$: Case 4: exponential convergence towards some $\rho^*\in (0,1).$
    \end{itemize}
\end{itemize}
In Table~\ref{tab:activation_stats}, we have quantified the fixed points, relevant quantities, and convergence rates for each activation function. This table shows that for the most popular activation functions that are used in practice, we can quantify a fixed point and explicit rate that their kernel sequence will converge to. 

It is worth noting that because Theorem~\ref{thm:global_attract} provides an upper bound, we cannot directly compare the quantified value of $\alpha$ for different cases, particularly when they correspond to different cases. In other words, because this is a worst-case bound, it is possible that an activation function manifests a much higher convergence than predicted. We will further explore the gaps between the empirical values and the upper bound in the subsequent figures.

\begin{table}[ht]
\centering
\renewcommand{\arraystretch}{1.3}
\begin{tabular}{|c|c|c|c|c|c|c|c|c|c| }
\toprule
$\phi$ & $C$ & $\alpha$ & $\rho_\star$ & $\kappa(\rho_\star)$ & $\kappa(0)$ & $\kappa'(0)$ & $\kappa'(1)$ & $\kappa'(\rho^*)$ & Convergence \\ \midrule 
\texttt{tanh} &0.63 &0.93 &0.00 &0.00 &0.00 &0.93 &1.18 &0.93 &case 1 /Exp.\\ 
\texttt{SeLU} &1.00 &0.97 &0.00 &0.00 &0.00 &0.97 &1.06 &0.97 &case 1 /Exp.\\ 
\texttt{ReLU} &0.71 &0.95 &1.00 &1.00 &0.32 &0.50 &0.95 &0.95 &case 2/Exp.\\ 
\texttt{sigmoid} &0.54 &0.15 &1.00 &1.00 &0.85 &0.15 &0.15 &0.15 &case 2 /Exp. \\ 
\texttt{exp} &2.72 &0.74 &1.00 &1.00 &0.37 &0.37 &1.00 &1.00 &case 3 /Poly.\\ 
\texttt{GELU} &0.65 &0.93 &0.76 &0.76 &0.19 &0.59 &1.07 &0.93 &case 4 /Exp. \\ 
\texttt{CELU} &0.80 &0.97 &0.60 &0.60 &0.04 &0.90 &1.04 &0.97 &case 4 /Exp.\\ 
\texttt{ELU} &0.80 &0.97 &0.60 &0.60 &0.04 &0.90 &1.04 &0.97 &case 4 /Exp.\\ 
\bottomrule
\end{tabular}
\caption{Activation functions: a review of most commonly used activation functions (normalized, see Remark~\ref{rem:act-normalization}), according to Theorem~\ref{thm:global_attract}. In the last column, we have noted the case of convergence according to Theorem~\ref{thm:global_attract}, as well as the speed of convergence (Poly: polynomial, Exp: exponential, not to be confused with  activation $\phi=\texttt{exp}$) in depth. Note that only the \texttt{exp} activation function has polynomial convergence.}
\label{tab:activation_stats}
\end{table}

\subsection{Figures \ref{fig:validation_plots} and \ref{fig:validation_plots2}}
Each row of the figures shows one activation function and each column is dedicated to the following (from left to right). The First column shows the activation function itself, up to some scaling (see Remark~\ref{rem:act-normalization}). In the following, we will describe the last three columns. 

\paragraph{Kernel map and fixed point iterations} The second column shows the kernel map $\kappa$ of each activation function (blue), as defined in Definition~\ref{def:kernel_map}, and the fixed iterations over this kernel (red), as defined by pairs of points 
$$
(\rho_{\ell+1},\rho_\ell), \qquad \qquad \rho_{\ell+1}:=\kappa(\rho_\ell)
$$
where $\rho_0$ indicates the initial value that is arbitrarily chosen (shown as a red dot marker). As the iterations progress, they converge to the fixed point ($\rho^*$, red star marker). The identity map is shown to demonstrate visually how the iterations lead to the fixed point.  

It is worth noting that kernel maps of all activation functions are analytic, i.e., smooth up to an arbitrary degree, even when the activation function itself is non-smooth, which is the case for \texttt{ReLU} and \texttt{SeLU} at $x=0.$ We can also see that despite the non-monotonicity of some activation functions, such as \texttt{GELU}, the kernel map maintains its unique properties, such as having a unique globally attracting fixed point. 

These kernel map plots give important insight into the global convergence of the kernel and the predictions of Theorem~\ref{thm:global_attract}. Intuitively, the fixed point is where the kernel map intersects with the identity line, and the speed at which it converges to the fixed point is inversely related to the slope at $\rho^*.$ For example, \texttt{tanh} and \texttt{SeLU} are close to the identity line, and they exhibit very slow convergence, while \texttt{sigmoid} deviates the most from the identity and shows the fastest convergence rate. Overall, we can see here why deviation from the identity, as captured by $\kappa'(\rho^*),$ and other terms, play an important role in the convergence of the kernel sequence. 

It is worth noting that while \texttt{sigmoid} and \texttt{tanh} are tightly related by the formula $\texttt{tanh}(x) = 2\cdot\texttt{sigmoid}(2x)-1,$ their convergence properties are drastically different, with \texttt{tanh} converging exponentially towards zero (orthogonality or independence bias), while \texttt{sigmoid} converges exponentially towards $1$ (strong similarity bias). We can explain this by observing that the shifting ensures that the activation function has zero-mean postactivations, making $\rho^*$ a fixed point.

\paragraph{Kernel convergence theory vs empirical results.}
The third and fourth columns show the kernel sequence (empirical: blue, theory: red) as a function of depth $\ell.$  The theoretical bound corresponds to Theorem~\ref{thm:global_attract}. In the fourth column, we plot the upper bound provided by the theorem, and in the third column, we use the upper bound on the distance to $\rho^*$ to give a lower or upper bound on the kernel sequence curve.  One of the most important takeaways is that for all activation functions except \texttt{exp}, the distances $|\rho_\ell-\rho^*|,$ shown on the log-scale, decay linearly with depth $\ell.$ This is perfectly aligned with the prediction of Theorem~\ref{thm:global_attract}, because \texttt{exp} is the only activation function that has polynomial convergence (See Table~\ref{tab:activation_stats}). The gap between theory and empirical values corresponds to the worst-case analysis for the global convergence rate.

\begin{figure*}[ht]
    \centering
    \includegraphics[width=\textwidth]{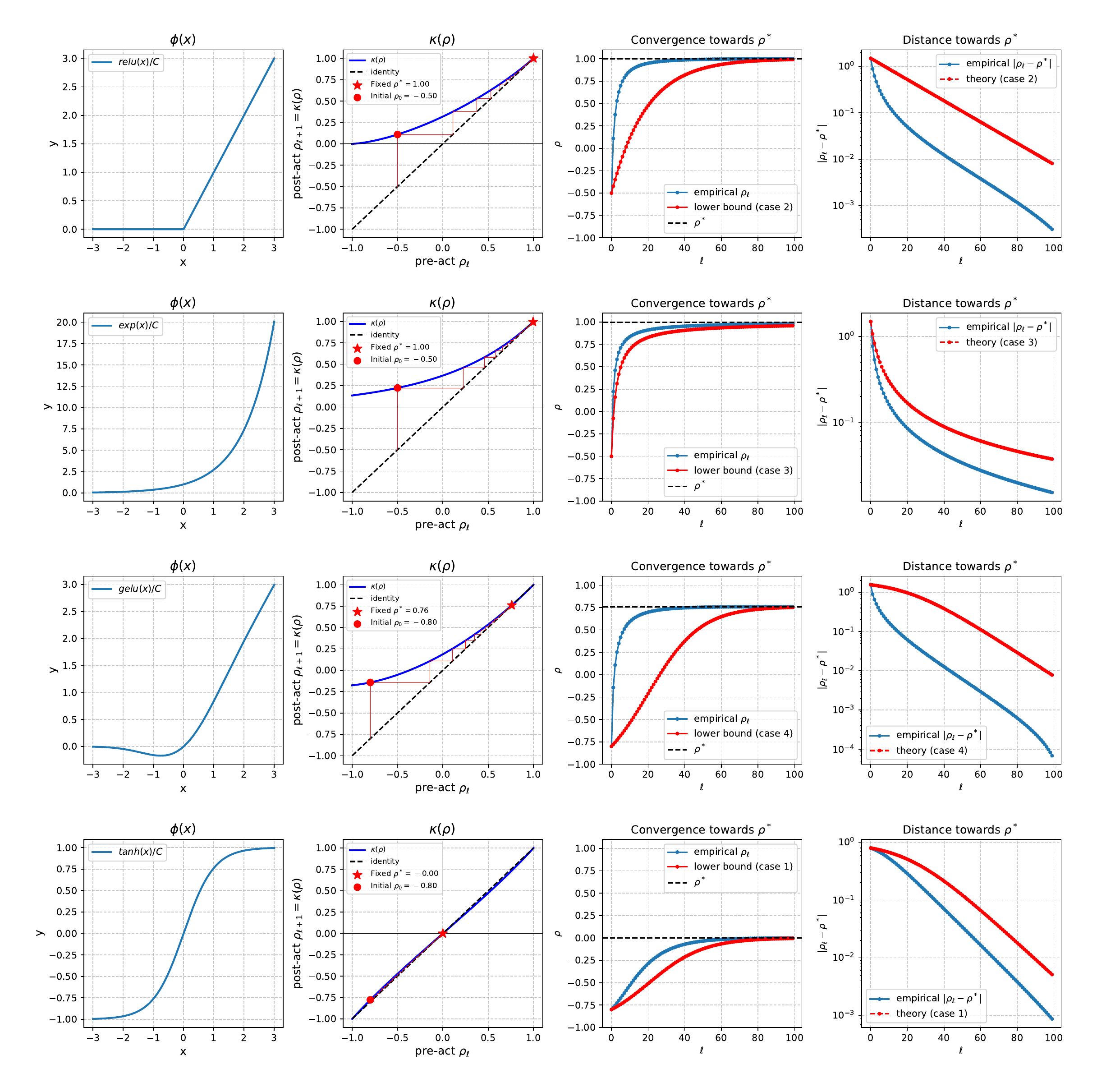}
    \caption{\small Validation of Theorem~\ref{thm:global_attract} Each row corresponds to an activation, scaled down by a factor $C$ to obey $\E \phi(X)^2=1.$. From op to bottom: relu, exp, gelu, tanh. From left, the first column shows the activation, second column shows kernel map, third column shows the kernel sequence vs depth along with the theory prediction, and fourth column shows the distance to the fixed points in for theory and empirical kernels. (Remainder on the next page)}
    \label{fig:validation_plots}
\end{figure*}

\begin{figure*}[ht]
    \centering
    \includegraphics[width=\textwidth]{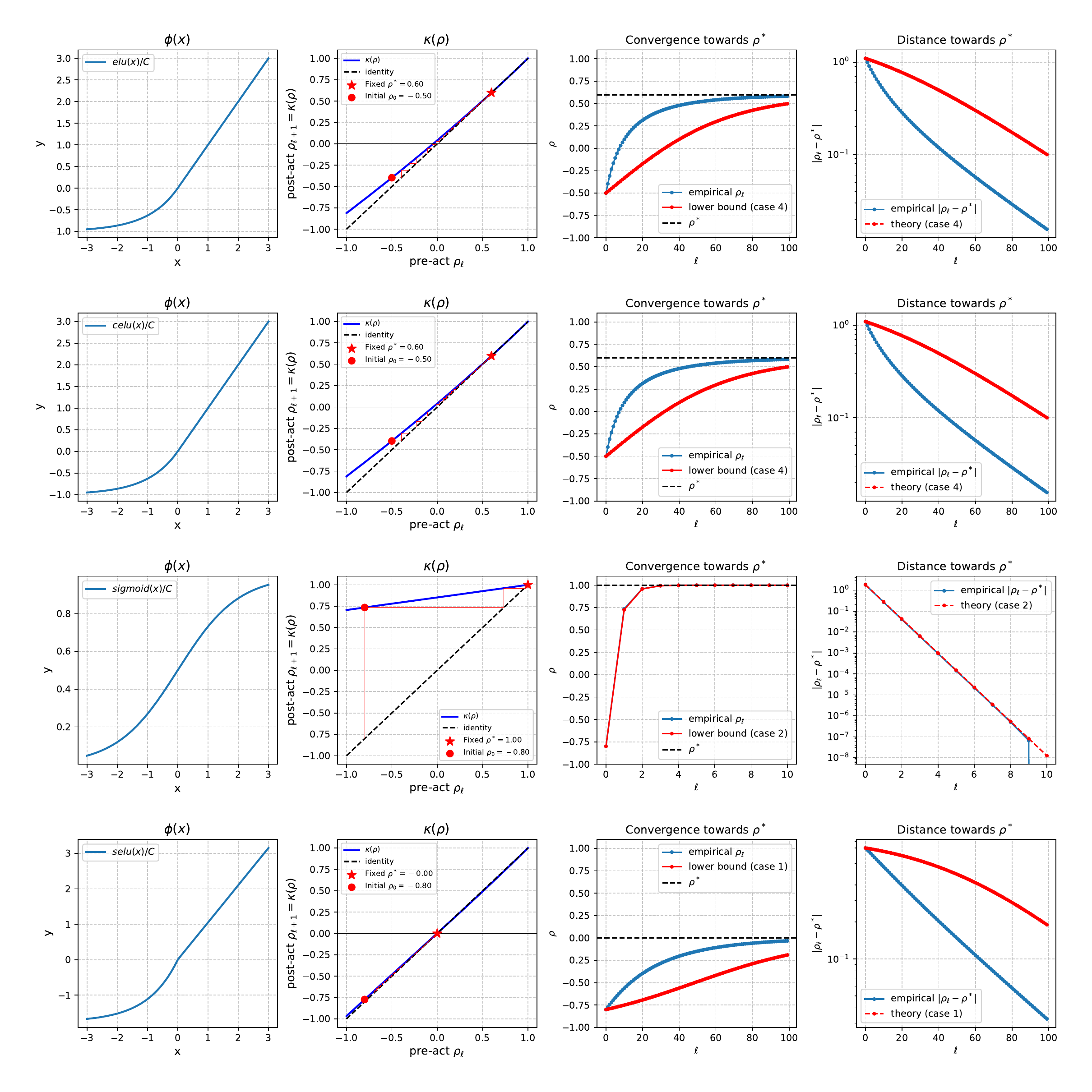}
    \caption{\small Continuation of Figure\ref{fig:validation_plots}, for activations elu, celu, sigmoid, and selu. for the particular case of sigmoid, the errors fall below the numerical precision and cannot be computed.}
    \label{fig:validation_plots2}
\end{figure*}

\begin{figure*}[ht]
    \centering
    \includegraphics[width=\textwidth]{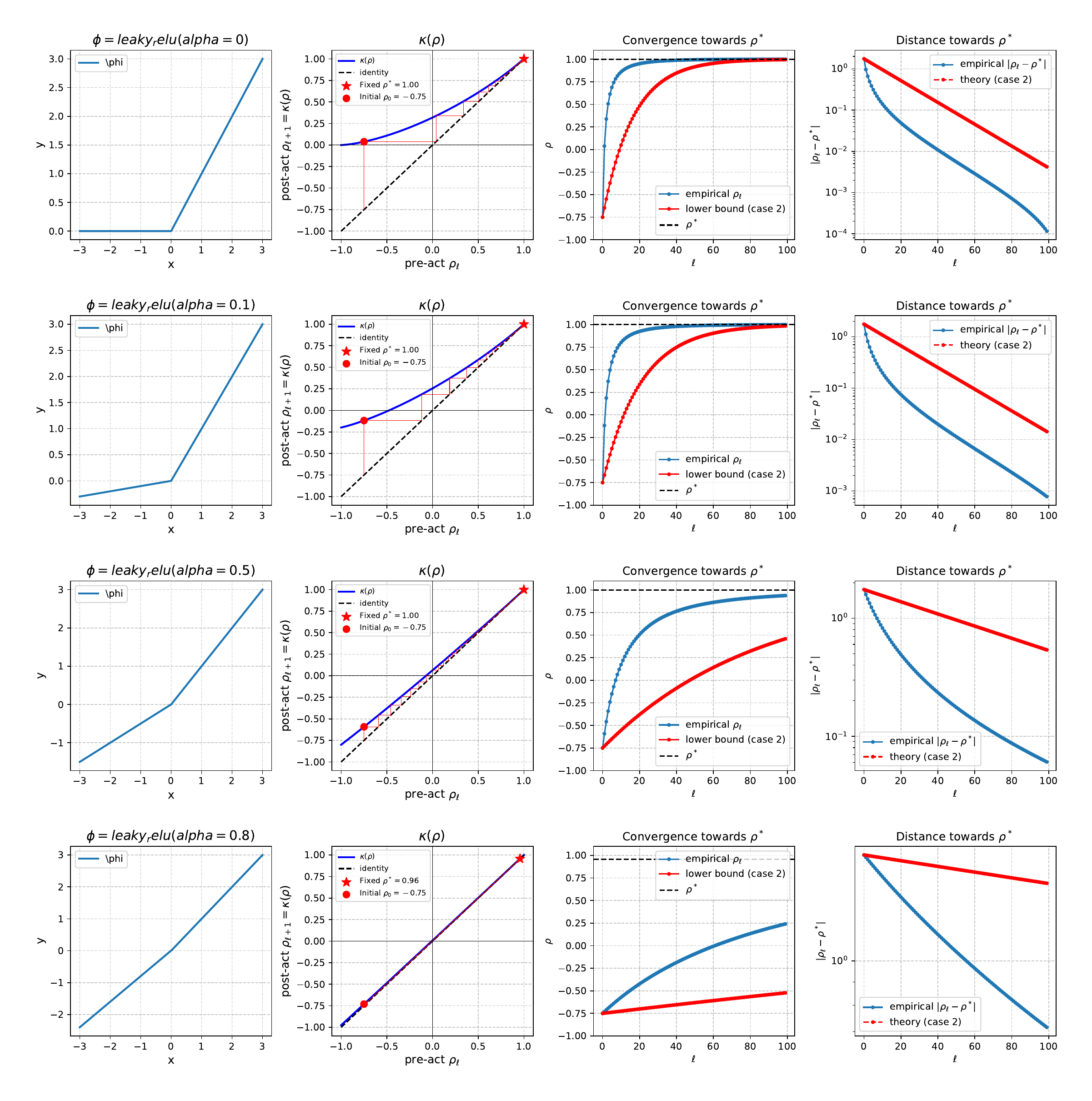}
    \caption{\small Continuation of Figure\ref{fig:validation_plots}, for \texttt{LeakyReLU} with various negative slopes. }
    \label{fig:leaky_relu}
\end{figure*}


\end{document}